\documentclass{article}

\PassOptionsToPackage{numbers, compress}{natbib}

\usepackage[preprint]{neurips_2026}
\usepackage{multirow}
\usepackage[utf8]{inputenc} 
\usepackage[T1]{fontenc}    
\usepackage{hyperref}       
\usepackage{url}            
\usepackage{booktabs}       
\usepackage{amsfonts}       
\usepackage{nicefrac}       
\usepackage{microtype}      
\usepackage{xcolor}         
\usepackage{tikz}
\usetikzlibrary{shapes.geometric, arrows.meta, positioning, fit, backgrounds, calc}
\usepackage{enumitem}       

\usepackage{graphicx}      
\usepackage{amsmath, amssymb, amsthm} 
\usepackage{mathtools}      
\usepackage{algorithm}
\usepackage{algorithmic}
\usepackage{wrapfig}
\usepackage{caption}
\usepackage{subcaption}

\theoremstyle{plain}
\newtheorem{theorem}{Theorem}[section]
\newtheorem{proposition}[theorem]{Proposition}
\newtheorem{lemma}[theorem]{Lemma}

\theoremstyle{definition}
\newtheorem{definition}{Definition}[section]
\newtheorem{assumption}{Assumption}[section]
\theoremstyle{remark}
\newtheorem{remark}{Remark}[section]

\newcommand{\R}{\mathbb{R}}
\newcommand{\E}{\mathbb{E}}
\newcommand{\I}{\mathcal{I}}
\newcommand{\bx}{\mathbf{x}}
\newcommand{\pa}{\mathrm{pa}}
\newcommand{\ch}{\mathrm{ch}}
\newcommand{\Cov}{\mathrm{Cov}}
\newcommand{\Var}{\mathrm{Var}}

\title{Optimization-Free Topological Sort for Causal Discovery via the Schur Complement of Score Jacobians}

\author{%
  Rui Wu \\
  School of Computer Science and Engineering\\
  University of Science and Technology of China\\
  \texttt{wurui22@mail.ustc.edu.cn} \\
  \And
  Hong Xie\thanks{Corresponding author.} \\
  School of Computer Science and Engineering\\
  University of Science and Technology of China\\
  \texttt{hongx87@ustc.edu.cn}
}

\begin{document}

\maketitle

\begin{abstract}
Continuous causal discovery typically couples representation learning with structural optimization via non-convex acyclicity penalties, which subjects solvers to local optima and restricts scalability in high-dimensional regimes. We propose a decoupled paradigm that shifts the causal discovery bottleneck from non-convex optimization to statistical score estimation. We introduce the Score-Schur Topological Sort (SSTS), an algorithm that extracts topological order directly from unconstrained generative models, bypassing constrained structure optimization. We establish that the causal hierarchy leaves a geometric signature within the score function: iterative graph marginalization is mathematically equivalent to computing the Schur complement of the Score-Jacobian Information Matrix (SJIM) under linear conditions. This translates the acyclicity constraint into an algebraic procedure with a dominant cost of $\mathcal{O}(d^3)$ operations. For non-linear systems, we formulate the expectation gap of Schur marginalization and introduce Block-SSTS to compress extraction depth, bounding structural error. Empirically, SSTS allows causal structural analysis on non-linear graphs up to $d=1000$. At this scale, our framework indicates that once the non-convex optimization bottleneck is mathematically bypassed, the structural fidelity of continuous causal discovery is bounded by the finite-sample estimation variance of the global score geometry. By reducing graph extraction to matrix operations, this work reframes scalable causal discovery from a constrained optimization problem to a statistical estimation challenge.
\end{abstract}

\section{Introduction}
\label{sec:intro}

Discovering the underlying directed acyclic graph (DAG) from observational data is a core problem in statistical machine learning \citep{pearl2009causality, spirtes2000causation}. Combinatorial search methods are constrained by exponential scaling. \citet{Zheng2018NOTEARS} introduced NOTEARS, which established a continuous algebraic characterization of acyclicity ($\mathrm{tr}(e^{W \circ W}) = d$), enabling gradient-based optimization algorithms \citep{Bello2022DAGMA}.

While offering computational advantages over search algorithms, continuous constrained optimization faces a persistent bottleneck: acyclicity penalties are non-convex. When applied to high-dimensional or non-linear data, optimizers converge to local minima, producing structural false discoveries. Mitigating this necessitates hyperparameter tuning alongside heuristic post-hoc thresholding.

Score-based generative models \citep{hyvarinen2005estimation, song2019generative, ho2020denoising} have been investigated for causal discovery \citep{Rolland2022}. The Jacobian of the score function (i.e., the Hessian of the log-density) contains sufficient information to identify topological leaf nodes. Identifying subsequent nodes requires retraining the score model on marginalized subsets or designing masked architectures, restricting scalability.

We present the \textbf{Score-Schur Topological Sort (SSTS)}, which extracts causal topological order directly from the score function of an unconstrained generative model. By decoupling representation learning from topological extraction, we demonstrate that causal hierarchy leaves an extractable algebraic signature in the geometry of the score function $\nabla_{\bx} \log p(\bx)$. Our contributions are:
\begin{enumerate}[leftmargin=*, noitemsep, topsep=0pt]
    \item \textbf{Algebraic Mapping of Acyclicity (Linear Exactness):}
    In linear Gaussian ANMs, we establish an exact equivalence between iterative leaf-node marginalization and the Schur complement of the Score-Jacobian Information Matrix (SJIM). 
    For non-linear ANMs, we characterize the non-commutativity between expectation and Schur marginalization via an explicit expectation gap.
    \item \textbf{Decoupled Extraction:} By bypassing the non-convex optimization bottleneck, SSTS reframes high-dimensional causal discovery from combinatorial search to pure statistical estimation.
    \item \textbf{Characterization of Non-linear Boundaries:} We formulate the algebraic expectation gap of Schur marginalization in non-linear Additive Noise Models (ANMs). To bound this structural error accumulation, we introduce Block-SSTS, which compresses the extraction depth and scales causal recovery up to $d=1000$ variables in experiments.
\end{enumerate}

\begin{figure*}[t]
\centering
\begin{tikzpicture}[
    >=stealth,
    box/.style={rectangle, rounded corners=4pt, draw=darkgray!60, thick, align=center, text width=3.4cm, minimum height=1.6cm, font=\small},
    arrow/.style={->, thick, color=darkgray, >=Triangle},
    groupbox1/.style={rectangle, rounded corners=6pt, draw=gray!50, dashed, thick, inner sep=12pt},
    groupbox2/.style={rectangle, rounded corners=6pt, draw=gray!50, dashed, thick, inner sep=12pt, minimum width=\textwidth-\pgflinewidth}
]

\node[box, fill=cyan!5] at (2.6, 0) (data) {
    \textbf{1. Input Data}\\[1.5ex] 
    $\mathbf{X} \in \mathbb{R}^{N \times d}$ \\ 
    \scriptsize ANM Manifold
};

\node[box, fill=orange!5] at (8.2, 0) (score) {
    \textbf{2. Pre-trained Score}\\[1.5ex] 
    $\min_\theta \mathbb{E} \| s_\theta - \nabla \log p \|^2_2$ \\ 
    \scriptsize $\nabla_{\mathbf{x}} s_\theta$ Extraction
};

\node[box, fill=green!5] at (0, -4) (SJIM) {
    \textbf{3. Sparse Hessian}\\[1.5ex] 
    $\hat{\mathcal{I}} \approx \frac{1}{N} \sum -\nabla_{\mathbf{x}} s_\theta$ \\ 
    \scriptsize w/ Group Lasso $\ell_{1,2}$
};

\node[box, fill=green!5] at (5.4, -4) (ssts) {
    \textbf{4. Score-Schur Sort}\\[1.5ex] 
    Block Marginalization \\ 
    \scriptsize $\arg\min (\hat{\mathcal{I}}_{ii}) \Rightarrow \mathrm{Leaf}$
};

\node[circle, draw=darkgray!60, thick, fill=white, align=center, minimum size=1.5cm, font=\small] at (10.8, -4) (dag) {
    \textbf{DAG} $\mathcal{G}$
};

\draw[arrow] (data) -- node[above, align=center, font=\scriptsize, color=darkgray] {Representation\\Learning} (score);
\draw[arrow] (SJIM) -- node[above, align=center, font=\scriptsize, color=darkgray] {Algebraic\\Extraction} (ssts);
\draw[arrow] (ssts) -- node[above, align=center, font=\scriptsize, color=darkgray] {Lasso\\Pruning} (dag);
\draw[arrow, rounded corners=6pt] (score.south) -- (8.2, -2.0) -- node[above, align=center, font=\scriptsize, color=darkgray] {Geometric Signature} (0, -2.0) -- (SJIM.north);
\draw[arrow, dashed] (ssts.south) ++(-0.3,0) arc (180:360:0.3 and 0.4) node[below=0.1cm, midway, font=\scriptsize, color=darkgray] {$\mathcal{O}(d)$ Iters};

\begin{scope}[on background layer]
    \node[groupbox1, fill=gray!5, fit=(data)(score)] (stage1) {};
    \node[above=1ex of stage1, font=\bfseries\small, color=darkgray] {Stage 1: Generative Modeling};
    \node[groupbox2, fill=gray!5] at (5.4, -4) (stage2) {};
    \node[above=0.9cm of stage2, font=\bfseries\small, color=darkgray] {Stage 2: Algebraic Structure Discovery};
\end{scope}
\end{tikzpicture}
\caption{\textbf{Decoupled Architecture of SSTS.} Stage 1 optimizes an unconstrained density estimator to capture the data manifold. Stage 2 executes a structure-optimization-free algebraic extraction. Mapping iterative leaf-node marginalization as the Schur complement of the estimated Score-Jacobian Information Matrix replaces constrained acyclicity optimization with deterministic algebraic elimination (exact under linear Gaussian conditions, approximate otherwise).}
\label{fig:pipeline}
\end{figure*}
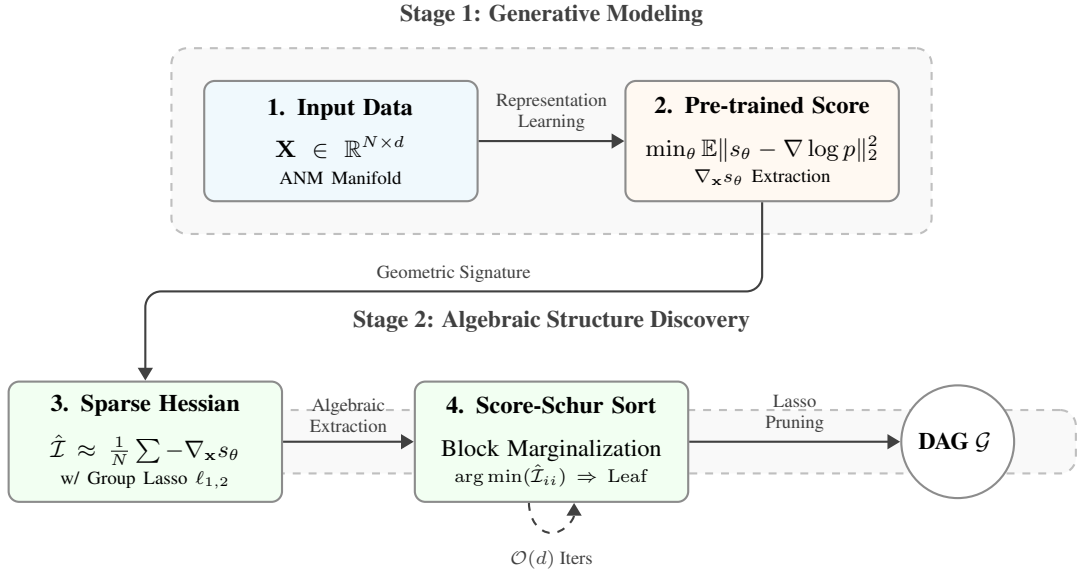

\section{Related Work}
\label{sec:related_work}

\textbf{Continuous Causal Discovery.} 
The transition from combinatorial search to continuous optimization advanced causal discovery. Following NOTEARS \citep{Zheng2018NOTEARS}, efforts expanded continuous acyclicity constraints to non-linear neural networks \citep{lachapelle2020gradient}, graph neural networks \citep{yu2019dag}, and exact likelihood frameworks \citep{ng2020role}. Recent advancements integrated differentiable penalties into Bayesian structure learning \citep{lorch2021differentiable} and log-determinant characterizations \citep{Bello2022DAGMA}. Because these paradigms inherently couple graph discovery with parameter estimation, they suffer from non-convex penalty landscapes. Gradient-based optimizers are thus trapped in local optima when scaling to high-dimensional datasets.

\textbf{Order-Based Search and Generative Models.} 
Topological order search reduces the DAG identification space to permutations, enabling algorithms to evaluate edges systematically under continuous Additive Noise Models (ANMs) \citep{hoyer2008nonlinear, peters2014causal}. Generative models, including normalizing flows \citep{khemakhem2021causal} and diffusion models \citep{ho2020denoising, song2019generative}, have been applied to structure learning. Specifically, score matching techniques isolate leaf nodes from the data log-density \citep{Rolland2022}. Extensions of this framework \citep{montagna2023scalable, sanchez2023diffusion}, nevertheless, continue to rely on recursive network retraining or diffusion-guided structural search to sequence the remaining nodes, creating severe computational bottlenecks.

\textbf{Our Distinction.} 
Existing score-based paradigms rely on iterative model retraining or heuristic search to sequentially isolate nodes. In contrast, SSTS offers a deterministic algebraic extraction procedure. By mapping continuous graph marginalization to the Schur complement of a single pre-trained Score-Jacobian Information Matrix\citep{lauritzen1996graphical, friedman2008sparse}, SSTS decouples generative representation learning from the topological sort. Non-convex acyclicity optimization is bypassed, and the hierarchy is obtained via deterministic matrix operations (exact in linear Gaussian regimes and approximate otherwise).

\section{Theoretical Foundation: Score-Jacobian Geometry and Topology}
\label{sec:theory}
In linear Gaussian ANMs, the topological ordering is recoverable from the SJIM via Schur-complement-based elimination. 
For non-linear ANMs, the same elimination on the expected SJIM introduces an expectation gap, which we characterize below.

We consider observational data generated by a continuous Additive Noise Model (ANM) \citep{shimizu2006linear, hoyer2008nonlinear}. Let $\mathcal{G} = (\mathcal{V}, \mathcal{E})$ be a DAG over $d$ variables. The structural equations are:
\begin{equation} \label{eq:anm}
    x_i = f_i(\bx_{\pa(i)}) + \epsilon_i, \quad i = 1, \dots, d.
\end{equation}
where $\pa(i)$ denotes the parents of node $i$, $f_i$ are smooth, twice-differentiable functions, and $\epsilon_i$ are mutually independent noise variables. 

\begin{assumption}[Homoscedastic Gaussian Noise]
\label{assum:homo_noise}
Noise components $\epsilon_i \sim \mathcal{N}(0, \sigma^2)$ are Gaussian with variance $\sigma^2 > 0$. The conditional log-likelihood is $\log p_{\epsilon_i}(\epsilon_i) = -\frac{\epsilon_i^2}{2\sigma^2} - \frac{1}{2}\log(2\pi\sigma^2)$.
\end{assumption}

\begin{assumption}[Causal Faithfulness and Non-degeneracy]
\label{assum:faithfulness}
For any directed edge $j \to i \in \mathcal{E}$, the structural equation $f_i$ is non-degenerate with respect to $x_j$, satisfying: $\E_{p(\bx)} \left[ \left( \frac{\partial f_i}{\partial x_j} \right)^2 \right] > 0$.
\end{assumption}

Let $s(\bx) = \nabla_{\bx} \log p(\bx)$ be the true score function. We define $\I \in \R^{d \times d}$ as the Expected Score-Jacobian Information Matrix (SJIM), formulated as the expected negative Hessian of the log-density:
\begin{equation}
    \I = \E_{p(\bx)} \left[ - \nabla_{\bx}^2 \log p(\bx) \right].
\end{equation}
Under the strictly additive noise formulation (Assumption \ref{assum:homo_noise}), each conditional distribution $p(x_i \mid \bx_{\pa(i)})$ forms a location family parameterized by $\bx_{\pa(i)}$. In this regime, the SJIM corresponds to the Fisher Information Matrix with respect to these location parameters. Hereafter, we refer to $\I$ as the structural information matrix.

\subsection{Exact Leaf Identifiability via Diagonal Energy}
Building upon the leaf identifiability principles of score-based causal discovery \citep{Rolland2022}, we show that the diagonal elements of $\I$ explicitly encode the DAG's topological hierarchy.

\begin{theorem}[Leaf Node Identifiability]
\label{thm:leaf_discovery}
Under the ANM defined in Eq. \ref{eq:anm} satisfying Assumption \ref{assum:homo_noise}, the diagonal entries of the Score-Jacobian Information Matrix are given by:
\begin{equation}
    \I_{ii} = \frac{1}{\sigma^2} + \frac{1}{\sigma^2} \sum_{j \in \ch(i)} \E_{p(\bx)} \left[ \left( \frac{\partial f_j}{\partial x_i} \right)^2 \right].
\end{equation}
Consequently, if and only if node $l$ is a topological sink (leaf node), $\ch(l) = \emptyset$, and it attains the minimum diagonal value: $\I_{ll} = \min_{i} \I_{ii} = 1/\sigma^2$. (Formal proof is deferred to Appendix~\ref{app:proofs_leaf}).
\end{theorem}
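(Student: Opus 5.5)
The plan is to factorize the log-density along the DAG and differentiate twice. By Assumption~\ref{assum:homo_noise} and the ANM in Eq.~\ref{eq:anm},
\begin{equation*}
\log p(\bx) = \sum_{k=1}^d \log p_{\epsilon_k}\bigl(x_k - f_k(\bx_{\pa(k)})\bigr) = -\frac{1}{2\sigma^2}\sum_{k=1}^d \bigl(x_k - f_k(\bx_{\pa(k)})\bigr)^2 + \mathrm{const}.
\end{equation*}
Write $r_k = x_k - f_k(\bx_{\pa(k)})$, so that $r_k=\epsilon_k$ in distribution. The variable $x_i$ enters the term $k=i$ linearly, through $r_i$. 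It enters the terms $k\in\ch(i)$ through $f_k$. It does not enter any other term.

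The first derivative is therefore
\begin{equation*}
\partial_i \log p = -\frac{r_i}{\sigma^2} + \frac{1}{\sigma^2}\sum_{j\in\ch(i)} r_j\,\frac{\partial f_j}{\partial x_i}.
\end{equation*}
Differentiating once more in $x_i$, and using $\partial_i r_i=1$ and $\partial_i r_j = -\partial f_j/\partial x_i$ for $j\in\ch(i)$, gives
\begin{equation*}
-\partial_i^2\log p = \frac{1}{\sigma^2} + \frac{1}{\sigma^2}\sum_{j\in\ch(i)}\Bigl[\Bigl(\frac{\partial f_j}{\partial x_i}\Bigr)^2 - r_j\,\frac{\partial^2 f_j}{\partial x_i^2}\Bigr].
\end{equation*}
Next I take expectations under $p(\bx)$. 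This is the key step. For $j\in\ch(i)$, the factor $\partial^2 f_j/\partial x_i^2$ depends only on $\bx_{\pa(j)}$. These are functions of the noises of the ancestors of $j$, which are independent of $\epsilon_j = r_j$. Hence $\E[r_j\,\partial^2_{ii} f_j] = \E[\epsilon_j]\,\E[\partial^2_{ii} f_j]=0$. This yields the stated formula for $\I_{ii}$. I will add the mild integrability assumption that the expectations involved are finite; this is implicit in the statement.

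For the ``consequently'' part, each summand $\E[(\partial f_j/\partial x_i)^2]$ is nonnegative, so $\I_{ii}\ge 1/\sigma^2$ for every $i$. If $\ch(l)=\emptyset$, the sum is empty and $\I_{ll}=1/\sigma^2$. A sink always exists in a DAG, so this value is the minimum. Conversely, suppose $\ch(l)\neq\emptyset$ and pick some $j\in\ch(l)$. Assumption~\ref{assum:faithfulness} gives $\E[(\partial f_j/\partial x_l)^2]>0$, so $\I_{ll}>1/\sigma^2$. Thus attaining the minimum value $1/\sigma^2$ characterizes exactly the leaves.

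The main obstacle is justifying the vanishing cross term $\E[r_j\,\partial^2 f_j]$. This requires making precise that $\bx_{\pa(j)}$ is measurable with respect to $\{\epsilon_k : k \text{ an ancestor of } j\}$, which I would argue by induction along a topological order. It also requires the integrability needed to split the expectation. A subtle point is that the theorem uses Assumption~\ref{assum:faithfulness} for the ``only if'' direction, even though the statement cites only Assumption~\ref{assum:homo_noise}. I would invoke Assumption~\ref{assum:faithfulness} explicitly there.
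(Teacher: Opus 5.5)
Your proposal is correct and follows the paper's proof essentially step for step: you factorize the log-density along the DAG, differentiate twice in $x_i$, and kill the $\epsilon_j\,\partial^2 f_j/\partial x_i^2$ cross term using the zero mean of $\epsilon_j$ and its independence from $\bx_{\pa(j)}$; you then invoke Assumption~\ref{assum:faithfulness} to get the strict inequality for non-leaves. Your extra care points are refinements the paper leaves implicit: the existence of a sink (so the minimum is actually attained), integrability, and the fact that the ``only if'' direction needs Assumption~\ref{assum:faithfulness} even though the statement cites only Assumption~\ref{assum:homo_noise}.
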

\begin{remark}[Robustness to Heteroscedasticity]
\label{rmk:heteroscedasticity}
While Theorem \ref{thm:leaf_discovery} assumes homoscedasticity, absolute diagonal energy $\I_{ii}$ exhibits scale sensitivity under heteroscedastic noise. In Appendix \ref{app:heteroscedasticity}, we extend this formulation by deriving a scale-invariant Relative Diagonal Variance ($CV^2$) metric, which maintains exact algebraic identifiability across varying noise regimes.
\end{remark}

\subsection{Exact Marginalization via Schur Complement in Linear ANMs}
Once a leaf node $l$ is identified, sorting necessitates the algebraic elimination of $l$ from the graph. Conventional recursive paradigms involve retraining a score model on the marginal distribution $p(\bx_{\setminus l})$. In Gaussian Graphical Models (GGMs) \citep{lauritzen1996graphical, friedman2008sparse}, marginalizing a variable corresponds to taking the Schur complement of the precision matrix. We formalize this property for linear ANMs and score-based SJIMs, showing that recursive model retraining is unnecessary.

\begin{theorem}[Exact Marginalization in Linear ANMs]
\label{thm:schur_marginalization}
Following the marginalization properties of GGMs \citep{lauritzen1996graphical}, let $\I \in \R^{d \times d}$ be the SJIM of the joint distribution $p(\bx)$ under a linear ANM. If $x_l$ is a leaf node, the SJIM of the marginal distribution $p(\bx_{\setminus l})$ is the Schur complement of $\I$ with respect to index $l$:
\begin{equation} \label{eq:schur}
    \I_{\mathrm{marginal}} = \I_{\setminus l} - \I_{\setminus l, l} \left( \I_{ll} \right)^{-1} \I_{l, \setminus l}
\end{equation}
where $\I_{\setminus l}$ is the submatrix excluding the $l$-th row and column. (Formal proof is deferred to Appendix~\ref{app:proofs_schur}).
\end{theorem}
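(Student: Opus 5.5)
In a linear Gaussian ANM the joint density is an exact multivariate Gaussian, so the plan is to reduce everything to precision-matrix algebra. Write the model as $\bx = W^\top \bx + \boldsymbol{\epsilon}$ with $W$ supported on $\mathcal{E}$ and $\boldsymbol{\epsilon} \sim \mathcal{N}(0, \sigma^2 I)$ by Assumption~\ref{assum:homo_noise}. Because $\mathcal{G}$ is acyclic, $I - W$ is invertible, hence $\bx = (I-W^\top)^{-1}\boldsymbol{\epsilon}$ is a zero-mean Gaussian with covariance $\Sigma = \sigma^2 (I-W^\top)^{-1}(I-W)^{-1}$. Its precision is $\Theta = \Sigma^{-1} = \sigma^{-2}(I-W)(I-W^\top)$.

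The first step is to identify the SJIM with the precision matrix. For a Gaussian, $\log p(\bx) = -\tfrac12 \bx^\top \Theta \bx + \text{const}$, so $-\nabla^2_{\bx}\log p(\bx) = \Theta$ for every $\bx$. The expectation in the definition of $\I$ is therefore trivial, and $\I = \Theta$. This is also the step that fails in the non-linear case: there the Hessian depends on $\bx$, and expectation does not commute with the Schur complement, which gives the expectation gap mentioned in the text.

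The second step handles the marginal. Any marginal of a Gaussian is Gaussian, so $\bx_{\setminus l} \sim \mathcal{N}(0, \Sigma_{\setminus l})$. Applying the first step again, its SJIM is the constant matrix $(\Sigma_{\setminus l})^{-1}$. We then invoke the standard block-inversion identity: for a positive definite $\Sigma$ with inverse $\Theta$, $(\Sigma_{\setminus l})^{-1} = \Theta_{\setminus l} - \Theta_{\setminus l,l}\,\Theta_{ll}^{-1}\,\Theta_{l,\setminus l}$. To prove it, write $\Sigma\Theta = I$ blockwise and solve the off-diagonal block equation $\Sigma_{\setminus l}\Theta_{\setminus l,l} + \Sigma_{\setminus l,l}\Theta_{ll} = 0$ for $\Sigma_{\setminus l,l}$. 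Substituting into the diagonal block equation then gives $\Sigma_{\setminus l}\bigl(\Theta_{\setminus l} - \Theta_{\setminus l,l}\Theta_{ll}^{-1}\Theta_{l,\setminus l}\bigr) = I$. The inverse $\Theta_{ll}^{-1}$ exists because $\Theta_{ll} = \I_{ll} \ge 1/\sigma^2 > 0$, using Theorem~\ref{thm:leaf_discovery} or positive definiteness directly. Replacing $\Theta$ by $\I$ yields Eq.~\eqref{eq:schur}.

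The leaf hypothesis is not needed for the identity itself. Its role is structural: since $\ch(l)=\emptyset$, the marginal $p(\bx_{\setminus l})$ is again a linear Gaussian ANM on $\mathcal{G}\setminus l$ with the same equations. The Schur complement should therefore equal $\sigma^{-2}(I-W_{\setminus l})(I-W_{\setminus l}^\top)$, and I would verify this directly as a consistency check. For a leaf, row $l$ of $W$ is zero, so $\I_{\setminus l,l}\I_{ll}^{-1}\I_{l,\setminus l} = \sigma^{-2}\, w_l w_l^\top$, where $w_l$ is the vector of incoming weights $W_{\setminus l, l}$. Subtracting this removes exactly the $l$-th column contribution in $(I-W)(I-W^\top)$. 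This matches the expected form and ensures that Theorem~\ref{thm:leaf_discovery} can be reapplied recursively.

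I expect no real obstacle here. The one point needing care is the first step: the score Hessian must be constant so that the Fisher/SJIM expectation can be dropped. Everything after that is the classical Gaussian marginalization identity.
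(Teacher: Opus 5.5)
Your proof is correct, but its load-bearing step differs from the paper's. The paper does not use the general Gaussian marginalization identity. With $\bx = B\bx + \boldsymbol{\epsilon}$ (your $W^\top$) and $\mathcal{S} = \mathcal{V}\setminus\{l\}$, it expands $\I = \sigma^{-2}(I-B)^\top(I-B)$. It then uses the leaf property $B_{\mathcal{S},l} = \mathbf{0}$ to read off $\I_{ll} = 1/\sigma^2$, $\I_{\mathcal{S},l} = -\sigma^{-2}B_{l,\mathcal{S}}^\top$ and $\I_{\mathcal{S}\mathcal{S}} = \sigma^{-2}\bigl[(I-B_{\mathcal{S}\mathcal{S}})^\top(I-B_{\mathcal{S}\mathcal{S}}) + B_{l,\mathcal{S}}^\top B_{l,\mathcal{S}}\bigr]$, and notes that the Schur complement cancels the rank-one term exactly. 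That is precisely your ``consistency check''. The paper then identifies $\sigma^{-2}(I-B_{\mathcal{S}\mathcal{S}})^\top(I-B_{\mathcal{S}\mathcal{S}})$ as the marginal precision. In doing so it tacitly uses that a childless $l$ leaves $\bx_{\mathcal{S}} = B_{\mathcal{S}\mathcal{S}}\bx_{\mathcal{S}} + \boldsymbol{\epsilon}_{\mathcal{S}}$ as a self-contained linear ANM.

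Your derivation of $(\Sigma_{\mathcal{S}\mathcal{S}})^{-1} = \Theta_{\mathcal{S}\mathcal{S}} - \Theta_{\mathcal{S},l}\Theta_{ll}^{-1}\Theta_{l,\mathcal{S}}$ from $\Sigma\Theta = I$ establishes the identification without that structural claim. It also shows that, under Gaussianity, Eq.~\eqref{eq:schur} holds for every index. The leaf hypothesis then serves only the recursion: the reduced matrix keeps ANM form with unchanged weights, so Theorem~\ref{thm:leaf_discovery} can be reapplied.

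Conversely, the skeleton of the paper's computation is not tied to Gaussianity. For a homoscedastic linear ANM with a smooth non-Gaussian noise density, a direct calculation still gives $\I = J(I-B)^\top(I-B)$, with $J$ the noise Fisher information, and the same leaf cancellation goes through. Your block-inversion step instead relies on the Hessian being constant, hence on Gaussian noise. Without Gaussianity, eliminating a non-leaf node does not in general yield the marginal SJIM, so there the leaf property becomes essential.
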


\subsection{The Expectation Gap in Non-Linear ANMs}
For non-linear ANMs, the sample-wise Hessian $H(\bx) = -\nabla_{\bx}^2 \log p(\bx)$ is not constant. Because the expectation operator does not commute with matrix inversion, $\mathrm{Schur}(\E[H(\bx)]) \neq \E[\mathrm{Schur}(H(\bx))]$. We formalize this discrepancy below.

\begin{lemma}[Marginal SJIM Equivalence in Non-linear ANMs]
\label{lemma:marginal_equivalence}
Let $H(\bx) = -\nabla_{\bx}^2 \log p(\bx)$ be the sample-wise Hessian under a non-linear ANM. For a topological leaf node $l$, the expected Schur complement of the joint Hessian is equivalent to the Score-Jacobian Information Matrix of the marginalized distribution $p(\bx_{\setminus l})$: $\E[\mathrm{Schur}(H(\bx))] = \I_{\mathrm{marginal}}$. (Formal proof is deferred to Appendix~\ref{app:proofs_marginal_equivalence}).
\end{lemma}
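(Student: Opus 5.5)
The plan is to prove something stronger than the expectation identity: for a leaf $l$, the Schur complement of the sample-wise Hessian equals, \emph{pointwise} in $\bx$, the negative Hessian of the marginal log-density $\log p(\bx_{\setminus l})$. The lemma then follows by taking expectations.

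\textbf{Step 1: factorize the density.} Since $l$ is a sink, the joint density factorizes as
\begin{equation*}
\log p(\bx) = \log p(\bx_{\setminus l}) + \log p_{\epsilon_l}\bigl(x_l - f_l(\bx_{\pa(l)})\bigr).
\end{equation*}
Here $\pa(l)\subseteq \setminus l$, and the first term does not depend on $x_l$. This uses only that no other structural equation involves $x_l$. Under Assumption~\ref{assum:homo_noise}, the second term equals $-\frac{1}{2\sigma^2}\bigl(x_l - f_l(\bx_{\pa(l)})\bigr)^2$ up to a constant.

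\textbf{Step 2: Hessian of the conditional term.} Write $g(\bx_{\setminus l}) = \nabla_{\bx_{\setminus l}} f_l$, zero-padded outside $\pa(l)$, and $\epsilon_l = x_l - f_l$. Differentiating twice gives
\begin{equation*}
H(\bx) = \begin{pmatrix} H_{\mathrm{m}}(\bx_{\setminus l}) + \frac{1}{\sigma^2} g g^\top - \frac{\epsilon_l}{\sigma^2}\nabla^2 f_l & -\frac{1}{\sigma^2} g \\ -\frac{1}{\sigma^2} g^\top & \frac{1}{\sigma^2}\end{pmatrix},
\end{equation*}
where $H_{\mathrm{m}} = -\nabla^2_{\bx_{\setminus l}}\log p(\bx_{\setminus l})$. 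The curvature term $-\frac{\epsilon_l}{\sigma^2}\nabla^2 f_l$ is the one place where the non-linearity of $f_l$ enters, and it needs careful bookkeeping. It is \emph{not} cancelled by the Schur step. So the pointwise identity actually reads
\begin{equation*}
\mathrm{Schur}(H)(\bx) = H_{\mathrm{m}}(\bx_{\setminus l}) - \frac{\epsilon_l}{\sigma^2}\nabla^2 f_l(\bx_{\pa(l)}),
\end{equation*}
because $H_{ll} = 1/\sigma^2$ is a nonzero constant and the $\frac{1}{\sigma^2}gg^\top$ term cancels exactly.

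\textbf{Step 3: take expectations.} Since $\epsilon_l$ is independent of $\bx_{\setminus l}$ (it is independent of all non-descendants, and every other node is a non-descendant of a sink) and has mean zero, the curvature term vanishes in expectation. This requires $\nabla^2 f_l$ to be integrable, which I would add as a mild regularity condition. Because $H_{\mathrm{m}}$ depends only on $\bx_{\setminus l}$, its expectation under $p(\bx)$ equals its expectation under the marginal $p(\bx_{\setminus l})$. Hence
\begin{equation*}
\E[\mathrm{Schur}(H(\bx))] = \E_{p(\bx_{\setminus l})}[H_{\mathrm{m}}] = \I_{\mathrm{marginal}}.
\end{equation*}

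\textbf{Main obstacle.} The hard step is exactly this curvature term. The clean cancellation in Step 2 handles only the outer-product part, and the pointwise equality fails for non-linear $f_l$. The independence-plus-zero-mean argument is what rescues the identity in expectation, so the proof must state clearly that the equality holds in expectation rather than pointwise. I would also verify that differentiation under the log is legitimate, i.e.\ that $p(\bx_{\setminus l})$ is twice differentiable with integrable Hessian.
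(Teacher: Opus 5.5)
Your proof is correct and follows essentially the same route as the paper: you use the same factorization $\log p(\bx) = \log p(\bx_{\setminus l}) + \log p_{\epsilon_l}(\epsilon_l)$, the same block Hessian in which the Schur step cancels the $\frac{1}{\sigma^2} g g^\top$ term, and the same mean-zero independence argument to remove the curvature term $\epsilon_l \nabla^2 f_l$. The only difference is ordering (the paper takes the expectation of the upper-left block before subtracting $\frac{1}{\sigma^2}\E[\nabla f_l \nabla f_l^\top]$, while you form the Schur complement pointwise first, which is equivalent by linearity), and you should drop your opening claim of a stronger pointwise identity, since your own Step 2 shows it holds only up to that mean-zero curvature term.
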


\begin{proposition}[Non-linear Marginalization Expectation Gap]
\label{prop:nonlinear_error}
Let $H(\bx)$ be the sample-wise Hessian of the log-density under a general non-linear ANM, and let $l$ be a topological leaf node. The difference $\Delta$ between the expectation of the sample-wise Schur complement and the Schur complement of the expected Hessian is exactly the negative scaled covariance matrix of the leaf node's gradient:
\begin{equation}
    \Delta = \E[\mathrm{Schur}(H(\bx))] - \mathrm{Schur}(\E[H(\bx)]) = -\frac{1}{\sigma^2} \Cov_{p(\bx)} \left( \nabla_{\bx_{\setminus l}} f_l(\bx_{\pa(l)}) \right).
\end{equation}
(Formal proof is deferred to Appendix~\ref{app:proofs_expectation_gap}).
\end{proposition}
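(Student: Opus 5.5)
Our plan is to compute the sample-wise Hessian $H(\bx)$ explicitly in block form with respect to the split $(\bx_{\setminus l}, x_l)$. We exploit the factorization $\log p(\bx) = \log p(\bx_{\setminus l}) + \log p_{\epsilon}(x_l - f_l(\bx_{\pa(l)}))$, valid because $l$ is a leaf. Under Assumption~\ref{assum:homo_noise} the second term equals $-\frac{1}{2\sigma^2}(x_l - f_l)^2$ up to a constant. Writing $g(\bx) = \nabla_{\bx_{\setminus l}} f_l(\bx_{\pa(l)})$ (zero outside $\pa(l)$) and $\epsilon_l = x_l - f_l$, we will differentiate twice to obtain
\begin{equation*}
H_{ll} = \frac{1}{\sigma^2}, \qquad H_{\setminus l, l} = -\frac{1}{\sigma^2} g, \qquad H_{\setminus l} = H^{m}(\bx_{\setminus l}) + \frac{1}{\sigma^2} g g^\top - \frac{\epsilon_l}{\sigma^2} \nabla^2 f_l,
\end{equation*}
where $H^{m} = -\nabla^2 \log p(\bx_{\setminus l})$ is the marginal Hessian. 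The key structural fact is that $H_{ll}$ is the constant $1/\sigma^2$. It therefore commutes with expectation, and its inverse $\sigma^2$ is also constant.

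Next we will form the sample-wise Schur complement. The rank-one term cancels exactly: $\mathrm{Schur}(H) = H^{m} - \frac{\epsilon_l}{\sigma^2}\nabla^2 f_l$. Since $\epsilon_l$ is independent of $\bx_{\setminus l}$ and has zero mean, taking expectations gives $\E[\mathrm{Schur}(H)] = \E[H^m] = \I_{\mathrm{marginal}}$. This is just Lemma~\ref{lemma:marginal_equivalence}, which we may invoke directly.

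For $\mathrm{Schur}(\E[H])$, we take expectations blockwise: $\E[H_{ll}] = 1/\sigma^2$, $\E[H_{\setminus l,l}] = -\frac{1}{\sigma^2}\E[g]$, and $\E[H_{\setminus l}] = \I_{\mathrm{marginal}} + \frac{1}{\sigma^2}\E[gg^\top]$, where the $\epsilon_l$ term again vanishes by independence. The Schur complement is then $\I_{\mathrm{marginal}} + \frac{1}{\sigma^2}\bigl(\E[gg^\top] - \E[g]\E[g]^\top\bigr) = \I_{\mathrm{marginal}} + \frac{1}{\sigma^2}\Cov(g)$. Subtracting this from the first expression yields $\Delta = -\frac{1}{\sigma^2}\Cov(g)$, which is the claim.

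The main obstacle is justifying the vanishing of the cross term $\E[\epsilon_l \nabla^2 f_l(\bx_{\pa(l)})]$. This requires that $\epsilon_l$ is independent of $\bx_{\pa(l)}$, which holds because $l$ is a leaf: no other variable depends on $\epsilon_l$. It also requires integrability of $\nabla^2 f_l$ and of $gg^\top$, which we will take as part of the standing smoothness assumptions. We will also note that the factorization $p(\bx) = p(\bx_{\setminus l})\,p(x_l\mid \bx_{\pa(l)})$ uses the leaf property, so that $\bx_{\setminus l}$ carries the full marginal.
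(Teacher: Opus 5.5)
Your proof is correct, and its core is the paper's computation: $H_{ll}\equiv 1/\sigma^2$ and $H_{\setminus l,l} = -\frac{1}{\sigma^2}\nabla_{\bx_{\setminus l}} f_l$. The two Schur complements therefore differ only in $\frac{1}{\sigma^2}\E[gg^\top]$ versus $\frac{1}{\sigma^2}\E[g]\E[g]^\top$.

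Where you differ is that you also expand $H_{\setminus l}$ through the marginal log-density and evaluate each side separately against $\I_{\mathrm{marginal}}$. That forces you to use the independence of $\epsilon_l$ from $\bx_{\setminus l}$ and the integrability of $\nabla^2 f_l$. The paper never opens up $H_{\setminus l}$. It writes both $\E[\mathrm{Schur}(H)]$ and $\mathrm{Schur}(\E[H])$ as $\I_{\setminus l,\setminus l}$ minus a correction, and $\I_{\setminus l,\setminus l}$ cancels in $\Delta$.

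In your own notation, the term $-\frac{1}{\sigma^2}\E[\epsilon_l\nabla^2 f_l]$ appears identically in both expressions and cancels whether or not it vanishes. So the step you flag as the main obstacle is not needed for this proposition. It is only needed for the Lemma's identification $\E[\mathrm{Schur}(H)] = \I_{\mathrm{marginal}}$.

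What your route buys in exchange is interpretation. It shows that the sample-wise Schur complement is exactly the marginal SJIM, while $\mathrm{Schur}(\E[H])$ overestimates it by $\frac{1}{\sigma^2}\Cov(g)$. In other words, $\Delta$ is precisely the bias of the fast expected-Hessian update relative to the true marginal.
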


\begin{remark}[Localization of the Expectation Gap]
\label{rmk:error_localization}
Because the non-linear mechanism $f_l$ depends strictly on its Markovian parents $\pa(l)$, the gradient vector $\nabla_{\bx_{\setminus l}} f_l$ contains non-zero entries exclusively at indices corresponding to $\pa(l)$. Consequently, the covariance matrix $\Cov(\nabla f_l)$ is non-zero solely within the $\pa(l) \times \pa(l)$ submatrix block. This indicates that the approximation error induced by the non-linear expectation gap is localized to the immediate parent neighborhood of the marginalized leaf node. 
\end{remark}
\begin{remark}[Algebraic Closure of the Expectation Gap]
\label{rmk:covariance_patching}
While Block-SSTS compresses the expectation gap cascade, the structural covariance error $\Delta$ can theoretically be resolved analytically. Appendix \ref{app:cov_patching} formulates a closed-form \textit{Covariance Patching} mechanism, utilizing the sample covariance (i.e., second-order central moments) of the off-diagonal blocks to algebraically restore structural covariance. However, evaluating sample-wise Hessian covariances incurs an $\mathcal{O}(N d^2)$ memory footprint and an $\mathcal{O}(N)$ computational overhead per marginalization step. To preserve the $\mathcal{O}(1)$ matrix inversion and $\mathcal{O}(d^2)$ memory efficiency, Block-SSTS omits this patch in practice, trading theoretical fidelity for extreme computational scalability.
\end{remark}
\section{Algorithm: Score-Schur Topological Sort (SSTS)}
\label{sec:algorithm}

SSTS maps continuous marginalization to an empirical algebraic extraction using finite-sample score networks. Combining Theorem~\ref{thm:leaf_discovery} and Theorem~\ref{thm:schur_marginalization} yields a deterministic sorting algorithm. For an approximate score network $s_\theta(\bx)$ trained on observational data, we construct the empirical SJIM using the network's Jacobian to capture structural curvature.

\subsection{High-Dimensional Extension: Sparse Jacobian Prior}
\label{subsec:sparse_prior}
We parameterize $s_\theta(\bx)$ using a Multi-Layer Perceptron (MLP). In high-dimensional regimes ($d \ge 100$), standard score matching often yields an ill-conditioned empirical SJIM. As formalized in Proposition~\ref{prop:nonlinear_error}, uncompensated structural covariance propagates through sequential marginalization. Additionally, finite-sample neural networks introduce estimation variance into the empirical Jacobian $\nabla_{\bx} s_\theta(\bx)$, generating spurious gradient connections.

To control this variance, we introduce a structured sparsity prior. Following the group-wise variable selection principles \citep{yuan2006model}, we apply a Group Lasso ($\ell_{1,2}$) penalty to the input layer weights $W_1 \in \R^{h \times d}$ (where $h$ is the hidden dimensionality) of the MLP:
\begin{equation}
    \mathcal{L}_{\mathrm{sparse}} = \lambda_{\mathrm{sparse}} \sum_{j=1}^d \| [W_1]_{:, j} \|_2
\end{equation}
Penalizing column norms truncates weak input connections prior to non-linear activations. This induces structural sparsity on the global Jacobian estimator, restricting estimation variance in deep DAGs and yielding a well-conditioned, symmetric empirical SJIM: $\hat{\I} = \frac{1}{2}(\hat{J} + \hat{J}^\top)$, where $\hat{J} = \frac{1}{|\mathcal{D}|} \sum_{\bx \in \mathcal{D}} - \nabla_{\bx} s_\theta(\bx)$.

\subsection{Block-Schur Marginalization via Statistical Tolerance}
\label{subsec:block_schur}

In sparse DAGs, multiple variables often occupy the same topological stratum. By Theorem~\ref{thm:leaf_discovery}, parallel leaf nodes yield identical SJIM diagonal energies. In practice, however, finite-sample estimation variance disrupts this exact equivalence in the empirical SJIM.

Applying a strict sequential $\arg\min$ extraction over $\hat{\I}_{ii}$ forces an arbitrary ordering among these parallel nodes based on numerical noise. This spurious sequentialization is structurally unnecessary and artificially triggers the non-linear expectation gap ($\Delta$) formalized in Proposition~\ref{prop:nonlinear_error}, accumulating approximation errors across conditionally independent variables.

To address this, Block-SSTS groups parallel nodes using a relative tolerance parameter $\gamma \in (0, 1)$: $\mathcal{B} = \{ k \in \mathcal{S} \mid \hat{\I}_{kk} \le \min_{i \in \mathcal{S}} (\hat{\I}_{ii}) + \gamma | \min_{i \in \mathcal{S}} (\hat{\I}_{ii}) | \}$. Since the variables in $\mathcal{B}$ are topologically parallel, their theoretical cross-derivatives vanish. This conditional independence allows for a simultaneous block Schur complement inversion. 

This block-wise marginalization natively absorbs empirical estimation noise and reduces the total depth of sequential matrix inversions. Consequently, it truncates the accumulation of $\Delta$, preserving structural fidelity without incurring the $\mathcal{O}(Nd^3)$ computational cost of exact covariance patching (Appendix~\ref{rmk:covariance_patching}). For numerical stability during the empirical block inversion, we apply a Tikhonov regularization term, $\lambda_{\mathrm{ridge}} I$.

\begin{algorithm}[H]
\caption{Structure-Optimization-Free Block-Score-Schur Topological Sort (Block-SSTS)}
\label{alg:ssts}
\begin{algorithmic}[1]
\REQUIRE Pre-trained unconstrained score model $s_\theta(\mathbf{x})$, Observational data $\mathcal{D}$, Ridge penalty $\lambda_{\mathrm{ridge}} > 0$, Block tolerance $\gamma \in (0, 1)$.
\STATE \textbf{Initialize} empirical Jacobian $\hat{J} \leftarrow \frac{1}{|\mathcal{D}|} \sum_{\mathbf{x} \in \mathcal{D}} - \nabla_{\mathbf{x}} s_\theta(\mathbf{x})$.
\STATE \textbf{Symmetrize} to construct a symmetric empirical SJIM $\hat{\mathcal{I}} \leftarrow \frac{1}{2}(\hat{J} + \hat{J}^\top)$.
\STATE \textbf{Initialize} active set $\mathcal{S} \leftarrow \{1, 2, \dots, d\}$, topological order $\pi \leftarrow [ \ ]$.
\WHILE{$|\mathcal{S}| > 0$}
    \STATE Isolate leaf block $\mathcal{B}$ using energy tolerance.
    \STATE Sort nodes within $\mathcal{B}$ by diagonal energy and prepend to the ordering list $\pi$.
    \IF{$|\mathcal{S} \setminus \mathcal{B}| > 0$}
        \STATE Let $\mathcal{S}' = \mathcal{S} \setminus \mathcal{B}$. Compute Tikhonov-regularized Block Schur complement:
        \STATE $\hat{\mathcal{I}}_{\mathcal{S}', \mathcal{S}'} \leftarrow \hat{\mathcal{I}}_{\mathcal{S}', \mathcal{S}'} - \hat{\mathcal{I}}_{\mathcal{S}', \mathcal{B}} (\hat{\mathcal{I}}_{\mathcal{B}, \mathcal{B}} + \lambda_{\mathrm{ridge}} I)^{-1} \hat{\mathcal{I}}_{\mathcal{B}, \mathcal{S}'}$
    \ENDIF
    \STATE Update the active set: $\mathcal{S} \leftarrow \mathcal{S} \setminus \mathcal{B}$.
\ENDWHILE
\RETURN Topological order $\pi$.
\end{algorithmic}
\end{algorithm}

Once the topological order $\pi$ is extracted, resolving the global DAG reduces to an independent feature selection problem for each node over its structurally permitted predecessors. Following standard order-based pruning paradigms \citep{buhlmann2014cam}, this decomposes the acyclicity constraint into $d$ localized penalized regression tasks:
\[
    \min_{f_i} \E \left[ (x_i - f_i(\bx_{\mathrm{pre}_{\pi}(i)}))^2 \right] + \Omega(f_i)
\]
where $\mathrm{pre}_{\pi}(i)$ denotes candidate ancestors preceding node $i$, and $\Omega(f_i)$ imposes structural sparsity.

\textbf{Computational Complexity.} 
The primary computational cost lies in the empirical estimation of the SJIM. Constructing $\hat{\I}$ requires evaluating the Jacobian of the neural score network. Unlike continuous constrained optimization methods that evaluate $\mathcal{O}(d^3)$ acyclicity penalties repeatedly per gradient step, SSTS isolates structural extraction to a single pass of Jacobian extraction. Once $\hat{\I}$ is materialized, the global DAG is resolved via standard matrix operations.

\section{Experiments}
\label{sec:experiments}

We evaluate SSTS to verify its algebraic equivalence under linear conditions, analyze its performance on non-linear manifolds, and establish its statistical boundaries. Computations are executed on a single NVIDIA RTX 5090 GPU.

\textbf{Implementation Details.} 
SSTS avoids the $\mathcal{O}(d^3)$ acyclicity penalty $h(W)$ during training; the score network functions as an unconstrained density estimator parameterized by a multi-layer perceptron. \textbf{SSTS (Vanilla)} denotes extraction without Jacobian sparsity priors ($\lambda_{\mathrm{sparse}} = 0$) for low-dimensional tasks ($d \le 20$). For $d \ge 50$, \textbf{SSTS (Sparse)} applies a Group Lasso prior scaled by $\lambda_{\mathrm{sparse}} \propto \sqrt{d}$, with a constant ridge penalty $\lambda_{\mathrm{ridge}} = 10^{-4}$ for matrix inversion stability. To construct the empirical SJIM $\hat{\mathcal{I}}$, we compute the full $d \times d$ sample-wise Jacobians of the score network via vectorized automatic differentiation (`vmap`). Instead of storing $N$ matrices, we accumulate the expected Hessian via streaming mini-batches, bounding the memory footprint strictly to $\mathcal{O}(d^2)$ (e.g., $\sim 4$ MB for $d=1000$ in 32-bit float). We use 32-bit floating point for SJIM construction and extraction. Execution time is decoupled into generative representation learning ($T_{\mathrm{rep}}$), algebraic structural extraction ($T_{\mathrm{disc}}$), and downstream post-hoc edge pruning ($T_{\mathrm{prune}}$). To ensure algorithmic decoupling is evaluated without confounding software optimization variables, structural baselines are executed within a strictly controlled capacity environment (detailed in Appendix \ref{app:baseline_implementations}).

\textbf{Evaluation Metrics and Topological Ambiguity.} Structural accuracy is evaluated via Structural Hamming Distance (SHD), True Positive Rate (TPR), and Edge Violations (EV). Here, EV is rigorously defined as the number of true causal edges $j \to i \in \mathcal{E}$ that contradict the extracted topological order (i.e., where $\pi(j) > \pi(i)$). Sequence correlation metrics (e.g., Kendall's $\tau$) are strictly omitted. As we formally prove in Appendix~\ref{app:metric_justification}, evaluating a highly parallel DAG against a single ground-truth array via 1D rank correlation is mathematically ill-posed; the expected correlation degrades quadratically with the width of parallel strata, even for perfectly extracted causal geometries. Consequently, rigorous evaluation necessitates a metric strictly invariant to topological ambiguity, rendering EV the exact objective measure.

\subsection{Verification of Exact Recovery (Linear ANMs)}
\label{subsec:linear_exact}
Evaluations on linear ANMs confirm zero causal edge violations across all dimensions up to $d=1000$ for both population and empirical precision matrices, empirically validating Theorem~\ref{thm:schur_marginalization} independently of neural network estimation variance (numerical details in Appendix~\ref{app:linear_exact}).
\subsection{Baseline Comparisons: Algorithmic Decoupling on Non-Linear Manifolds}
\label{subsec:nonlinear_benchmark}

\begin{figure}[ht]
    \centering
    \includegraphics[width=\textwidth]{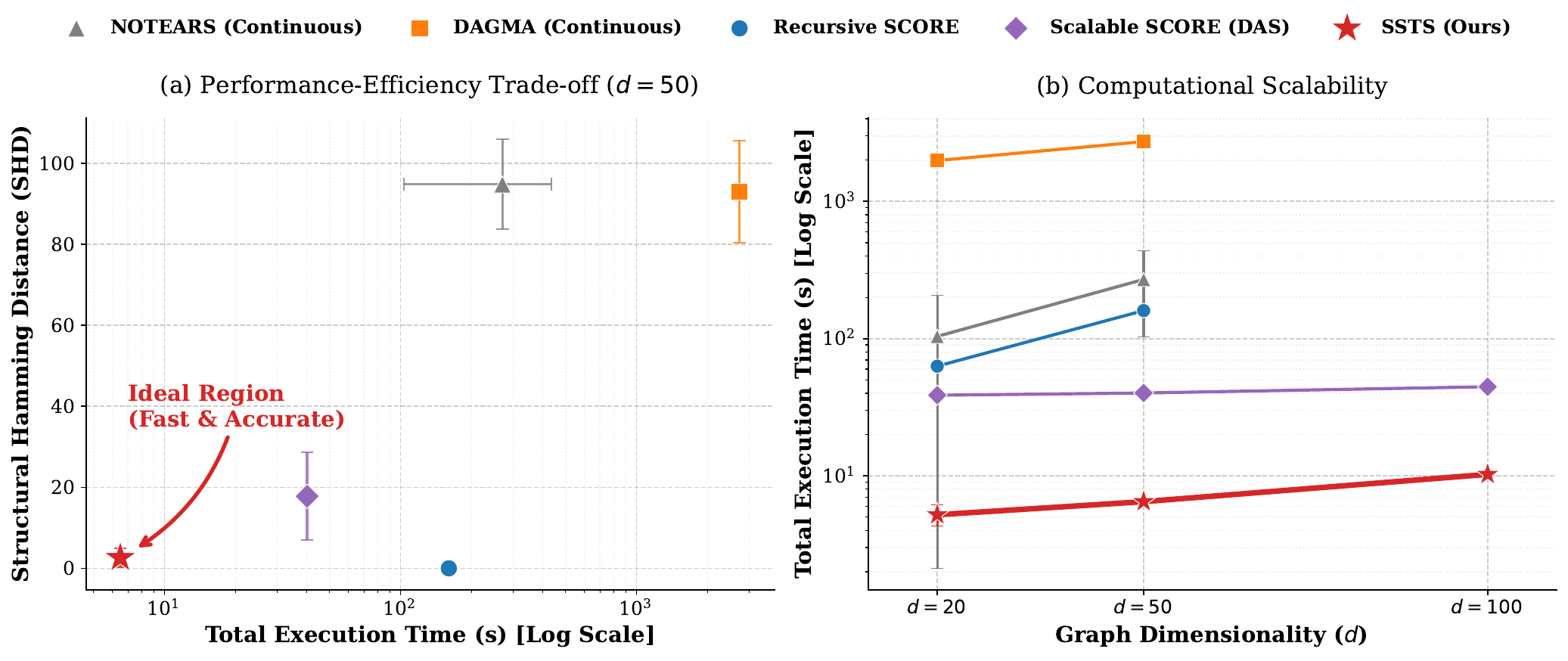}
    \caption{\textbf{Algorithmic Decoupling on Non-Linear Manifolds ($N=5000$).} 
    \textbf{(a)} The performance-efficiency trade-off evaluated at $d=50$. SSTS recovers graph structure efficiently compared to continuous optimization techniques. 
    \textbf{(b)} Computational scalability across dimensions. The gradient-free algebraic extraction of SSTS requires less than 15 seconds of total execution time at $d=100$.}
    \label{fig:main_benchmark}
\end{figure}

To evaluate the algorithm on non-linear manifolds, we formulate structural equations using $\tanh$ mechanisms ($N=5000$). Table~\ref{tab:main_benchmark_app} reports the empirical performance against continuous constrained optimizers (DAGMA, NOTEARS) and score-based paradigms (Recursive SCORE, Scalable SCORE). For a rigorous comparison, the total algorithmic time ($T_{\mathrm{algo}}$) encapsulates both representation learning ($T_{\mathrm{rep}}$) and structural extraction ($T_{\mathrm{disc}}$). Since continuous optimizers learn structure jointly, their execution time maps directly to $T_{\mathrm{algo}}$.

\textbf{Computational Bottleneck of Continuous Optimization.} Continuous constrained methods struggle to scale on non-linear manifolds. At $d=50$, DAGMA requires over 2700 seconds to evaluate the non-convex augmented Lagrangian, yielding an SHD of $93.0 \pm 12.6$. By eliminating the iterative acyclicity penalty, SSTS achieves $\text{SHD} = 2.6 \pm 2.4$ directly from the pre-trained SJIM, reducing $T_{\mathrm{algo}}$ to 3.66 seconds.

\textbf{The Computation-Estimation Trade-off.} At $d=100$, SSTS performs marginalization in a single algebraic pass ($T_{\mathrm{disc}} = 0.04$s), but exhibits higher structural error compared to dynamic masked evaluations ($\text{SHD} = 64.6 \pm 17.3$ vs. $10.4 \pm 5.5$ for Scalable SCORE). This performance divergence is directly attributable to the uncompensated non-linear expectation gap ($\Delta$) derived in Proposition~\ref{prop:nonlinear_error}. 

\begin{table}[ht]
\centering
\caption{\textbf{Evaluation of the Non-linear Expectation Gap ($\Delta$).} Exact sample-wise Schur marginalization analytically eliminates $\Delta$, isolating algorithmic extraction error from statistical estimation limits. Evaluated at $N=5000$.}
\label{tab:exact_schur_tradeoff}
\resizebox{0.95\textwidth}{!}{
\begin{tabular}{lcccc}
\toprule
& \multicolumn{2}{c}{\textbf{Standard Block-SSTS (Fast, $\Delta$ Uncompensated)}} & \multicolumn{2}{c}{\textbf{Exact Sample-wise SSTS (Slow, No $\Delta$)}} \\
\cmidrule(lr){2-3} \cmidrule(lr){4-5}
\textbf{Dimension ($d$)} & \textbf{Edge Violations} $\downarrow$ & \textbf{Extraction Time ($T_{\mathrm{disc}}$)} $\downarrow$ & \textbf{Edge Violations} $\downarrow$ & \textbf{Extraction Time ($T_{\mathrm{disc}}$)} $\downarrow$ \\
\midrule
50   & 2   & 0.23s & \textbf{1}   & 0.27s \\
100  & 38  & \textbf{0.04s} & \textbf{36}  & 4.47s \\
500  & 221 & \textbf{0.26s} & \textbf{176} & 574.61s \\
1000 & 444 & \textbf{0.63s} & \textbf{300} & 4493.19s \\
\bottomrule
\end{tabular}
}
\end{table}

Table~\ref{tab:exact_schur_tradeoff} shows that computing the exact sample-wise Schur complement prior to the expectation operator analytically eliminates $\Delta$, strictly reducing Edge Violations. However, evaluating sample-wise Hessian covariances imposes an $\mathcal{O}(N d^3)$ computational overhead. At $d=1000$, this exact formulation increases extraction time by a factor of $\sim 7000\times$. Furthermore, the residual error ($300$ violations) under exact algebraic marginalization confirms that structural fidelity is ultimately bounded not by the algebraic approximation, but by the estimation variance of the score Jacobian under finite-sample statistical starvation ($N=5000$). Block-SSTS omits the covariance patch to preserve $\mathcal{O}(1)$ block inversions, transferring the causal discovery bottleneck directly to this statistical estimation limit.

\subsection{Application to Real-World Biological Data}
\label{subsec:real_world}

To evaluate robustness against practical distribution shifts and latent confounding, we benchmark SSTS on the continuous Sachs protein-signaling network \citep{sachs2005causal} ($d=12, N=5000$). Structural extraction on this dataset is challenged by violations of strict additive noise assumptions, typically resulting in elevated Structural Hamming Distances (SHD) across continuous methods. Despite these deviations, SSTS achieves structural recovery yielding an SHD of $13.8 \pm 1.2$ and a True Positive Rate (TPR) of $0.40 \pm 0.04$ (compared to DAGMA's SHD of $15.0 \pm 1.7$ and Scalable SCORE's SHD of $16.2 \pm 1.5$). By decoupling the topological sort from generative modeling, SSTS completes the extraction in $6.62$ seconds, circumventing the computational overhead of non-convex optimization (DAGMA at $182.36$s) and masked Jacobian queries (Scalable SCORE at $56.32$s). Comprehensive tabular results are deferred to Appendix~\ref{app:sachs_table}.

\subsection{Ablation Studies: Mechanisms under Extreme Regimes}
\label{subsec:ablation}

\begin{figure}[ht]
    \centering
    \includegraphics[width=\textwidth]{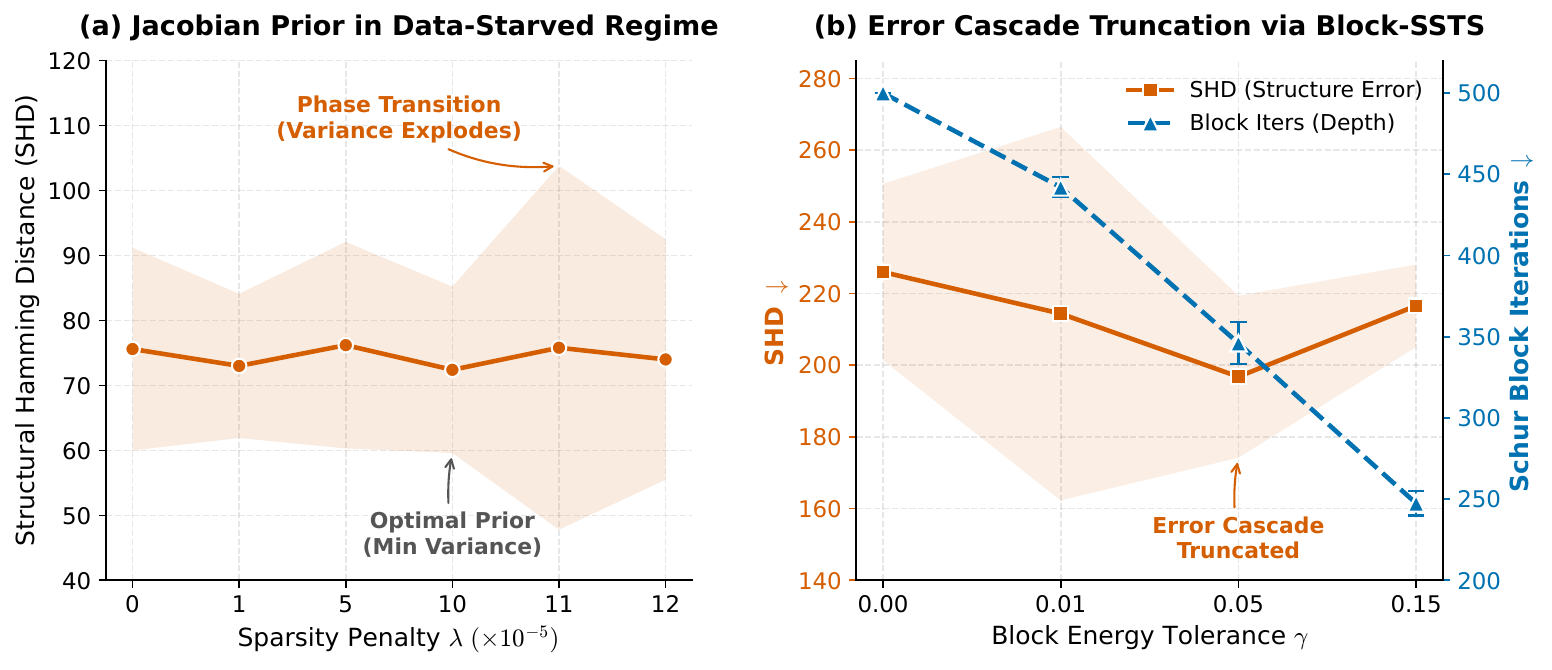}
    \caption{\textbf{Dual-Regime Ablation Studies.} 
    \textbf{(a)} In the data-starved regime ($d=100, N=500$), the Group Lasso penalty ($\lambda$) suppresses estimation variance. Exceeding the optimal prior triggers a transition where structural variance spikes. 
    \textbf{(b)} In the extreme scaling regime ($d=500, N=5000$), increasing the block tolerance ($\gamma$) compresses the matrix inversion depth (blue dashed line). This physical compression truncates the non-linear error cascade, tracking SHD trajectories (orange solid line).}
    \label{fig:ablation}
\end{figure}

We design two boundary-condition experiments: a data-starved regime to evaluate the sparse Jacobian prior, and an extreme scaling regime to evaluate block marginalization.

\textbf{Jacobian Sparsity in Data-Starved Regimes.} 
When the sample-to-feature ratio is limited ($N=500, d=100$), unconstrained neural networks exhibit estimation variance. As shown in Figure~\ref{fig:ablation}a, the baseline model ($\lambda = 0.0$) yields high cross-seed structural variance. Introducing the Group Lasso prior ($\lambda = 10 \times 10^{-5}$) mitigates finite-sample overfitting, achieving the minimum SHD and narrowing the error band. Increasing the penalty marginally ($\lambda = 11 \times 10^{-5}$) triggers a sharp variance spike. This indicates the boundary where the structural penalty suppresses true causal gradients.

\textbf{Block-SSTS Tolerance in Extreme Scaling.}
To quantify the structural error accumulation formulated in Proposition~\ref{prop:nonlinear_error}, we scale the topology to $d=500$. Figure~\ref{fig:ablation}b illustrates the effect of the block tolerance $\gamma$. Sequential marginalization ($\gamma=0.00$) forces $500$ Schur inversions, accumulating the non-linear expectation gap. Applying a moderate tolerance ($\gamma=0.05$) compresses the extraction depth to $346$ iterations. This truncation of the error cascade reduces the SHD to its minimum. An aggressive tolerance ($\gamma=0.15$) forces topologically distinct nodes into parallel blocks, violating the causal hierarchy and increasing structural error. Tabular data is provided in Appendix~\ref{app:ablation_tables}.

\subsection{Sensitivity to the Additive Noise Assumption}
\label{subsec:anm_violation}

The theoretical identifiability established in Theorem~\ref{thm:leaf_discovery} relies on the Additive Noise Model (ANM), where independent noise isolates the topological signature on the SJIM diagonal. We conduct a failure mode analysis ($d=50, N=5000$) by violating this assumption. 

Against an ANM control group (SHD $3.6 \pm 2.9$, TPR $0.95 \pm 0.04$), we evaluate Multiplicative Noise Models (MNM), formulated as $x_i = f_i(\bx_{\pa(i)}) + g_i(\bx_{\pa(i)}) \epsilon_i$. SSTS remains robust to this conditional heteroscedasticity (SHD $0.4 \pm 0.8$, TPR $0.99 \pm 0.02$). Parental dependency in the noise scale introduces a distributional asymmetry that enhances causal identifiability within the unconstrained score geometry.

Under Post-Nonlinear (PNL) mechanisms \citep{zhang2009identifiability}, defined as $x_i = (f_i(\bx_{\pa(i)}) + \epsilon_i)^3$, the algebraic extraction fails (SHD $94.4 \pm 8.4$, TPR $0.03 \pm 0.02$). The outer non-linear mapping couples the noise components with the structural parents in the gradient space, violating the statistical independence required for vanishing cross-derivatives. Extending exact Schur marginalization to arbitrary PNL transformations thus remains an open problem. Tabular results are deferred to Appendix~\ref{app:anm_table}.

\section{Discussion and Limitations}
\label{sec:discussion}

SSTS yields an exact equivalence in linear Gaussian ANMs. 
In non-linear ANMs, the algebraic extraction introduces an expectation gap and exposes finite-sample estimation limits of the score Jacobian.

\textbf{Topological Ambiguity vs. Algebraic Truth.} Traditional metrics like Kendall's $\tau$ evaluate the distance between a predicted sequence and a single ground-truth generative order. However, disconnected or parallel structures in sparse DAGs permit multiple valid topological sorts. As formalized by the degradation limit in Theorem~\ref{thm:kendall_collapse} (Appendix~\ref{app:metric_justification}), rank correlation collapses mathematically under parallel graph structures, penalizing geometrically identical permutations. Evaluating causal Edge Violations provides the only direct and unambiguous measure, ensuring that algorithmic extraction is evaluated independently of inherent topological ambiguity.

\textbf{The Non-linear Expectation Gap.} As quantified in Proposition~\ref{prop:nonlinear_error}, the Schur complement of the expected Hessian incurs an approximation gap proportional to $\Cov(\nabla f_l)$. In linear systems, $\nabla f_l$ is constant, the covariance vanishes, and extraction is exact. In non-linear systems, this gap is non-zero. Over $\mathcal{O}(d)$ sequential marginalization steps, uncompensated structural covariance accumulates, degrading the SJIM's topological signature.

\textbf{The Computation-Estimation Trade-off.} Standard continuous causal discovery minimizes a data-fidelity loss alongside a non-convex DAG penalty $h(W)=0$, decomposing the extraction error into optimization and statistical gaps ($\mathcal{E}_{\text{total}} \approx \mathcal{E}_{\text{opt}} + \mathcal{E}_{\text{stat}}$). Gradient-based solvers stall in local minima within high-dimensional spaces, causing $\mathcal{E}_{\text{opt}}$ to dominate the error profile alongside an $\mathcal{O}(d^3)$ computational requirement. By mapping graph marginalization to the Schur complement, SSTS algebraically reduces the topological optimization gap for structural extraction to zero ($\mathcal{E}_{\text{opt}} = 0$). The complexity of causal discovery is conserved and transferred to the statistical domain ($\mathcal{E}_{\text{stat}}$). Consequently, the structural fidelity of SSTS is bounded by two factors: (1) \textit{SJIM estimation variance}, as finite-sample neural networks struggle to yield unbiased precision geometry in extreme high-dimensional regimes (see Appendix~\ref{app:extreme_scaling}); and (2) \textit{non-linear error accumulation}, governed by the structural covariance gap ($\Delta$) formalized in Proposition~\ref{prop:nonlinear_error}. While SSTS circumvents the non-convex optimization bottleneck, operational limits are dictated by the fidelity of the generative score estimation.

\section{Conclusion}
\label{sec:conclusion}
We introduced the Score-Schur Topological Sort (SSTS), which relates leaf-node marginalization to Schur-complement-based elimination on the Score-Jacobian Information Matrix (SJIM). 
The mapping is exact in linear Gaussian ANMs and approximate in non-linear ANMs with a quantified expectation gap.

Decoupling representation learning from structural extraction shifts the fundamental bottleneck of causal discovery from combinatorial optimization to high-dimensional statistical estimation. The structural fidelity of causal discovery is no longer constrained by the local optima of acyclicity penalties, but by the capacity of generative models to estimate exact score functions.

\bibliographystyle{plainnat}
\bibliography{reference}

\begin{appendix}
\section{Deferred Proofs and Derivations}
\label{app:proofs}

\subsection{Proof of Theorem \ref{thm:leaf_discovery}}
\label{app:proofs_leaf}
\begin{proof}
By the Markov property of the ANM, the joint distribution factorizes as $p(\bx) = \prod_{k=1}^d p_{\epsilon_k}(x_k - f_k(\bx_{\pa(k)}))$. The joint log-likelihood is:
\begin{equation}
    \log p(\bx) = \sum_{k=1}^d \log p_{\epsilon_k}(x_k - f_k(\bx_{\pa(k)})).
\end{equation}
Applying the chain rule, the first partial derivative with respect to $x_i$ is:
\begin{equation}
    \frac{\partial \log p(\bx)}{\partial x_i} = \frac{\partial \log p_{\epsilon_i}(\epsilon_i)}{\partial \epsilon_i} \frac{\partial \epsilon_i}{\partial x_i} + \sum_{j \in \ch(i)} \frac{\partial \log p_{\epsilon_j}(\epsilon_j)}{\partial \epsilon_j} \frac{\partial \epsilon_j}{\partial x_i}.
\end{equation}
Since $\epsilon_k = x_k - f_k(\bx_{\pa(k)})$, we have $\frac{\partial \epsilon_i}{\partial x_i} = 1$ and $\frac{\partial \epsilon_j}{\partial x_i} = -\frac{\partial f_j}{\partial x_i}$. Given Gaussian noise, $\frac{\partial \log p_{\epsilon_k}(\epsilon_k)}{\partial \epsilon_k} = -\frac{\epsilon_k}{\sigma^2}$. Substituting these yields the score component:
\begin{equation}
    \frac{\partial \log p(\bx)}{\partial x_i} = -\frac{\epsilon_i}{\sigma^2} + \sum_{j \in \ch(i)} \frac{\epsilon_j}{\sigma^2} \frac{\partial f_j}{\partial x_i}.
\end{equation}
Differentiating again with respect to $x_i$ to obtain the diagonal of the Hessian:
\begin{equation} \label{eq:app_hessian_diag}
    \frac{\partial^2 \log p(\bx)}{\partial x_i^2} = -\frac{1}{\sigma^2} + \sum_{j \in \ch(i)} \left( \frac{1}{\sigma^2} \frac{\partial \epsilon_j}{\partial x_i} \frac{\partial f_j}{\partial x_i} + \frac{\epsilon_j}{\sigma^2} \frac{\partial^2 f_j}{\partial x_i^2} \right).
\end{equation}
Substituting $\frac{\partial \epsilon_j}{\partial x_i} = -\frac{\partial f_j}{\partial x_i}$ into Eq. \ref{eq:app_hessian_diag} yields:
\begin{equation}
    \frac{\partial^2 \log p(\bx)}{\partial x_i^2} = -\frac{1}{\sigma^2} - \frac{1}{\sigma^2} \sum_{j \in \ch(i)} \left( \frac{\partial f_j}{\partial x_i} \right)^2 + \sum_{j \in \ch(i)} \frac{\epsilon_j}{\sigma^2} \frac{\partial^2 f_j}{\partial x_i^2}.
\end{equation}
To compute the Score-Jacobian Information Matrix diagonal $\I_{ii} = \E[-\frac{\partial^2 \log p(\bx)}{\partial x_i^2}]$, we take the negative expectation over $p(\bx)$. By the definition of ANM, the noise term $\epsilon_j$ is statistically independent of its causal ancestors $\bx_{\pa(j)}$. Since $\frac{\partial^2 f_j}{\partial x_i^2}$ depends on $\bx_{\pa(j)}$, we have:
\begin{equation}
    \E_{p(\bx)}\left[ \epsilon_j \frac{\partial^2 f_j}{\partial x_i^2} \right] = \E[\epsilon_j] \E\left[ \frac{\partial^2 f_j}{\partial x_i^2} \right] = 0 \cdot \E\left[ \frac{\partial^2 f_j}{\partial x_i^2} \right] = 0.
\end{equation}
The expectation of the last term vanishes, leaving:
\begin{equation}
    \I_{ii} = \frac{1}{\sigma^2} + \frac{1}{\sigma^2} \sum_{j \in \ch(i)} \E_{p(\bx)} \left[ \left( \frac{\partial f_j}{\partial x_i} \right)^2 \right].
\end{equation}
Because $\E[(\partial f_j / \partial x_i)^2] > 0$ for any true topological parent-child relationship, it follows that $\I_{ii} > 1/\sigma^2$ for any node with children. For a leaf node $l$, $\ch(l) = \emptyset$, resulting exactly in $\I_{ll} = 1/\sigma^2$.
\end{proof}

\subsection{Proof of Theorem \ref{thm:schur_marginalization}}
\label{app:proofs_schur}
\begin{proof}
Assume the variables are permuted such that the leaf node $l$ corresponds to the last row and column of the matrix. For a linear ANM, Eq. \ref{eq:anm} takes the matrix form $\bx = B \bx + \boldsymbol{\epsilon}$, where $B$ is a strictly upper-triangular weighted adjacency matrix. The explicit solution is $\bx = (I - B)^{-1} \boldsymbol{\epsilon}$. The covariance matrix is $\Sigma = (I - B)^{-1} (\sigma^2 I) (I - B)^{-\top}$, and the precision matrix (which equals the SJIM for Gaussian distributions) is:
\begin{equation}
    \I = \Sigma^{-1} = \frac{1}{\sigma^2} (I - B)^\top (I - B).
\end{equation}
Let $\mathcal{S} = \mathcal{V} \setminus \{l\}$. Since $l$ is a leaf node, it has no children, implying the $l$-th column of $B$ is entirely zero: $B_{\mathcal{S}, l} = \mathbf{0}$. We partition $(I - B)$ into block form:
\begin{equation}
    I - B = \begin{pmatrix} I_{\mathcal{S}} - B_{\mathcal{S}\mathcal{S}} & \mathbf{0} \\ -B_{l, \mathcal{S}} & 1 \end{pmatrix}.
\end{equation}
Multiplying this block matrix by its transpose, we obtain the partitioned SJIM:
\begin{equation} \label{eq:app_partitioned_I}
    \I = \frac{1}{\sigma^2} \begin{pmatrix} (I_{\mathcal{S}} - B_{\mathcal{S}\mathcal{S}})^\top (I_{\mathcal{S}} - B_{\mathcal{S}\mathcal{S}}) + B_{l, \mathcal{S}}^\top B_{l, \mathcal{S}} & -B_{l, \mathcal{S}}^\top \\ -B_{l, \mathcal{S}} & 1 \end{pmatrix}.
\end{equation}
From Eq. \ref{eq:app_partitioned_I}, we explicitly extract the blocks: $\I_{ll} = \frac{1}{\sigma^2}$, $\I_{l, \mathcal{S}} = -\frac{1}{\sigma^2} B_{l, \mathcal{S}}$, $\I_{\mathcal{S}, l} = -\frac{1}{\sigma^2} B_{l, \mathcal{S}}^\top$, and $\I_{\mathcal{S}\mathcal{S}} = \frac{1}{\sigma^2} (I_{\mathcal{S}} - B_{\mathcal{S}\mathcal{S}})^\top (I_{\mathcal{S}} - B_{\mathcal{S}\mathcal{S}}) + \frac{1}{\sigma^2} B_{l, \mathcal{S}}^\top B_{l, \mathcal{S}}$. 

Computing the Schur complement of $\I$ with respect to $\I_{ll}$ yields:
\begin{align}
    \mathrm{Schur}(\I) &= \I_{\mathcal{S}\mathcal{S}} - \I_{\mathcal{S}, l} \I_{ll}^{-1} \I_{l, \mathcal{S}} \\
    &= \I_{\mathcal{S}\mathcal{S}} - \left( -\frac{1}{\sigma^2} B_{l, \mathcal{S}}^\top \right) (\sigma^2) \left( -\frac{1}{\sigma^2} B_{l, \mathcal{S}} \right) \\
    &= \I_{\mathcal{S}\mathcal{S}} - \frac{1}{\sigma^2} B_{l, \mathcal{S}}^\top B_{l, \mathcal{S}}.
\end{align}
Substituting the expression for $\I_{\mathcal{S}\mathcal{S}}$, the cross-terms cancel out:
\begin{equation}
    \mathrm{Schur}(\I) = \frac{1}{\sigma^2} (I_{\mathcal{S}} - B_{\mathcal{S}\mathcal{S}})^\top (I_{\mathcal{S}} - B_{\mathcal{S}\mathcal{S}}).
\end{equation}
This evaluates to the precision matrix of the marginal distribution $p(\bx_{\setminus l})$. The Schur complement marginalizes the leaf node while preserving the causal structural matrix $B_{\mathcal{S}\mathcal{S}}$ of the remaining subgraph.
\end{proof}

\subsection{Proof of Lemma \ref{lemma:marginal_equivalence}}
\label{app:proofs_marginal_equivalence}
\begin{proof}
By the definition of conditional probability, the joint log-likelihood decomposes as $\log p(\bx) = \log p(\bx_{\setminus l}) + \log p_{\epsilon_l}(x_l - f_l(\bx_{\pa(l)}))$. Differentiating twice with respect to the remaining variables $\bx_{\setminus l}$ yields:
\begin{equation}
    \nabla_{\bx_{\setminus l}}^2 \log p(\bx) = \nabla_{\bx_{\setminus l}}^2 \log p(\bx_{\setminus l}) - \frac{1}{\sigma^2} \nabla f_l \nabla f_l^\top + \frac{x_l - f_l}{\sigma^2} \nabla^2 f_l.
\end{equation}
Taking the negative expectation over $p(\bx)$ produces the upper-left block of the joint SJIM, $\I_{\setminus l, \setminus l}$. Since the noise $\epsilon_l = x_l - f_l$ is zero-mean and independent of the ancestors $\bx_{\setminus l}$, the expectation of the cross-term vanishes ($\E[\epsilon_l \nabla^2 f_l] = 0$). Thus:
\begin{equation}
    \I_{\setminus l, \setminus l} = \I_{\mathrm{marginal}} + \frac{1}{\sigma^2} \E[\nabla f_l \nabla f_l^\top].
\end{equation}
Rearranging this equation and evaluating $\E[\mathrm{Schur}(H(\bx))] = \I_{\setminus l, \setminus l} - \frac{1}{\sigma^2} \E[\nabla f_l \nabla f_l^\top]$, we recover $\I_{\mathrm{marginal}} = \E[\mathrm{Schur}(H(\bx))]$.
\end{proof}

\subsection{Proof of Proposition \ref{prop:nonlinear_error}}
\label{app:proofs_expectation_gap}
\begin{proof}
For a leaf node $l$, the second derivative $\frac{\partial^2 \log p}{\partial x_l^2} = -\frac{1}{\sigma^2}$ everywhere. The sample-wise block $H_{ll}(\bx) = \frac{1}{\sigma^2}$ is a constant scalar. The cross-derivative for $i \in \pa(l)$ is:
\begin{equation}
    \frac{\partial^2 \log p(\bx)}{\partial x_i \partial x_l} = \frac{1}{\sigma^2} \frac{\partial f_l}{\partial x_i}.
\end{equation}
The off-diagonal block evaluates to $H_{l, \setminus l}(\bx) = -\frac{1}{\sigma^2} \nabla_{\bx_{\setminus l}} f_l(\bx)^\top$.
Evaluating the expectation of the sample-wise Schur complement:
\begin{align}
    \E[\mathrm{Schur}(H(\bx))] &= \E\left[ H_{\setminus l, \setminus l}(\bx) - H_{\setminus l, l}(\bx) H_{ll}(\bx)^{-1} H_{l, \setminus l}(\bx) \right] \\
    &= \I_{\setminus l, \setminus l} - \frac{1}{\sigma^2} \E\left[ \nabla f_l \nabla f_l^\top \right].
\end{align}
Conversely, the Schur complement of the global expectation $\I = \E[H(\bx)]$ yields:
\begin{align}
    \mathrm{Schur}(\E[H(\bx)]) &= \I_{\setminus l, \setminus l} - \I_{\setminus l, l} \I_{ll}^{-1} \I_{l, \setminus l} \\
    &= \I_{\setminus l, \setminus l} - \frac{1}{\sigma^2} \E[\nabla f_l] \E[\nabla f_l]^\top.
\end{align}
Taking the difference between the two quantities, $\I_{\setminus l, \setminus l}$ cancels out:
\begin{align}
    \Delta &= \left( \I_{\setminus l, \setminus l} - \frac{1}{\sigma^2} \E[\nabla f_l \nabla f_l^\top] \right) - \left( \I_{\setminus l, \setminus l} - \frac{1}{\sigma^2} \E[\nabla f_l] \E[\nabla f_l]^\top \right) \\
    &= -\frac{1}{\sigma^2} \left( \E[\nabla f_l \nabla f_l^\top] - \E[\nabla f_l] \E[\nabla f_l]^\top \right) \\
    &= -\frac{1}{\sigma^2} \Cov_{p(\bx)} \left( \nabla f_l(\bx) \right).
\end{align}
\end{proof}
\section{Theoretical Analysis of Evaluation Metrics: The Degradation of Kendall's \texorpdfstring{$\tau$}{tau}}
\label{app:metric_justification}

In continuous causal discovery, the structural fidelity of an extracted sequence $\hat{\pi}$ is frequently evaluated against a ground-truth topological order $\pi^*$ using 1D rank correlation metrics, such as Kendall's $\tau$. We demonstrate that for sparse DAGs characterized by parallel branches and wide leaf strata, rank correlation is mathematically ill-posed.

Let $\mathcal{G} = (\mathcal{V}, \mathcal{E})$ be a DAG with $d$ variables. A topological sort maps $\mathcal{G}$ to a 1D sequence such that for every directed edge $j \to i \in \mathcal{E}$, node $j$ precedes node $i$. Let $\Pi(\mathcal{G})$ denote the \textit{topological equivalence class} containing all valid permutations that satisfy $\mathcal{E}$. 

\begin{definition}[Topological Ambiguity]
Two nodes $u, v \in \mathcal{V}$ are topologically parallel (i.e., conditionally independent given their ancestors without any directed path between them) if both permutations $\dots, u, \dots, v, \dots$ and $\dots, v, \dots, u, \dots$ result in valid topological sorts within $\Pi(\mathcal{G})$. 
\end{definition}

Consequently, forcing a highly parallel DAG into a single ground-truth array $\pi^*$ imposes an arbitrary strict ordering on variables that share no causal dependency. Evaluating a predicted sequence $\hat{\pi}$ against $\pi^*$ via rank correlation penalizes the algorithm for structural ambiguity rather than causal falsity. We formalize this geometric degradation below.

\begin{theorem}[Degradation Limit of Kendall's $\tau$]
\label{thm:kendall_collapse}
Let $\mathcal{G}$ be a DAG consisting of a causal ancestral structure and a stratum of $W$ parallel leaf nodes. Let $\pi^*, \hat{\pi} \in \Pi(\mathcal{G})$ be two perfectly valid topological sorts uniformly sampled from the equivalence class. Although both sequences capture the exact causal graph geometry (zero edge violations), the expected Kendall rank correlation between them degrades quadratically with the width $W$:
\begin{equation}
    \E[\tau(\pi^*, \hat{\pi})] = 1 - \frac{W(W-1)}{d(d-1)}
\end{equation}
Consequently, as the DAG broadens and the proportion of parallel nodes increases ($W \to d$), the expected rank correlation collapses to zero: $\lim_{W \to d} \E[\tau] = 0$.
\end{theorem}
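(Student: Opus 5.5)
The plan is to compute $\E[\tau]$ pair by pair using the pairwise definition of Kendall's $\tau$ and linearity of expectation. For two permutations of $d$ items,
\[
\tau(\pi^*,\hat{\pi}) = \frac{2}{d(d-1)}\sum_{\{u,v\}} c_{uv},
\]
where $c_{uv}=+1$ if the pair $\{u,v\}$ is ordered the same way in both sequences and $c_{uv}=-1$ otherwise. So $\E[\tau] = \frac{2}{d(d-1)}\sum_{\{u,v\}}\E[c_{uv}]$, and the proof reduces to classifying the $\binom{d}{2}$ unordered pairs.

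\textbf{Reading of the hypothesis.} I read "a causal ancestral structure and a stratum of $W$ parallel leaf nodes" as follows. The $d-W$ ancestral nodes admit a unique relative order in every element of $\Pi(\mathcal{G})$, for example a directed chain. Every leaf has the last ancestral node as an ancestor, so every leaf follows every ancestral node. The leaves are pairwise parallel in the sense of the Definition of Topological Ambiguity. This rigidity is exactly what makes the formula hold. I will say explicitly in the proof that it is assumed, since if ancestral pairs could also be parallel the expectation would be strictly smaller than the stated value.

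\textbf{Two classes of pairs.} Under this reading the pairs split as follows.
\begin{itemize}
\item Pairs with at least one ancestral node. Their relative order is forced in every valid sort, so they are concordant for any $\pi^*,\hat{\pi}\in\Pi(\mathcal{G})$ and give $c_{uv}=1$ deterministically. There are $\binom{d}{2}-\binom{W}{2}$ such pairs.
\item Pairs of two leaves. Transposing $u$ and $v$ in a sort is a bijection of $\Pi(\mathcal{G})$ onto itself: it remains valid because $u,v$ share no path and occupy positions after all ancestors. Hence, under uniform sampling, $\Pr[u \text{ before } v \text{ in } \hat{\pi}]=1/2$.
\end{itemize}

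\textbf{Summation.} Sample $\pi^*$ and $\hat{\pi}$ independently. For a leaf pair, conditioning on $\pi^*$ gives $\Pr[c_{uv}=1]=1/2$, so $\E[c_{uv}]=0$. Summing,
\[
\E[\tau]=\frac{2}{d(d-1)}\left(\binom{d}{2}-\binom{W}{2}\right)=1-\frac{W(W-1)}{d(d-1)}.
\]
The limit follows by substituting $W=d$, which gives $\E[\tau]=0$.

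\textbf{Main obstacle.} The difficulty is not the calculation but pinning down the hypothesis. If leaves could interleave with ancestral nodes, or the ancestral structure were itself ambiguous, additional pairs would have random order and the formula would break. So I will state the rigid-ancestor, leaves-last structure explicitly. I will also verify that the swap map preserves validity, which is what justifies the symmetric probability $1/2$.
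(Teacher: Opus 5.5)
Your proposal is correct and follows the paper's argument: only leaf--leaf pairs can be discordant, each with probability $1/2$ under independent uniform sampling, and summing gives $\E[\tau] = 1 - W(W-1)/(d(d-1))$. You additionally make explicit what the paper uses implicitly when it says inversions occur only among the $W$ parallel nodes: the ancestral nodes are rigidly ordered, every leaf follows every ancestral node, and a transposition argument justifies the probability $1/2$.
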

\begin{proof}
Kendall's $\tau$ computes $\tau = 1 - \frac{2K}{d(d-1)/2}$, where $K$ is the number of discordant pairs (inversions) between $\pi^*$ and $\hat{\pi}$. Because both $\pi^*$ and $\hat{\pi}$ are valid sorts within $\Pi(\mathcal{G})$, no true causal edges in $\mathcal{E}$ are inverted. Inversions strictly occur among the $W$ parallel nodes. Since $\hat{\pi}$ and $\pi^*$ independently permute these $W$ nodes, the probability of any pair being discordant is $1/2$. The expected number of discordant pairs is $\E[K] = \frac{1}{2} \binom{W}{2} = \frac{W(W-1)}{4}$. Substituting $\E[K]$ into the definition of $\tau$ yields $1 - \frac{W(W-1)}{d(d-1)}$.
\end{proof}

\begin{remark}[The Necessity of Edge Violations]
Theorem~\ref{thm:kendall_collapse} establishes that 1D rank correlation does not measure causal geometric fidelity; rather, it measures the arbitrary distance to a 1D projection of unidentifiable parallel strata. Evaluating causal extraction necessitates a metric strictly invariant to this topological ambiguity. Therefore, we utilize \textbf{Edge Violations (EV)}, strictly defined as the explicit contradiction of the invariant causal geometry:
\begin{equation}
    \mathrm{EV}(\hat{\pi}, \mathcal{G}) = \big| \{ (j, i) \in \mathcal{E} \mid \hat{\pi}(j) > \hat{\pi}(i) \} \big|
\end{equation}
While EV accurately quantifies the directional integrity of the extracted topological order $\hat{\pi}$, evaluating the exact sparsity of the equivalence class $\Pi(\mathcal{G})$ is deferred to the subsequent structural pruning phase (evaluated via SHD).
\end{remark}
\section{Detailed Results for Linear Exact Recovery}
\label{app:linear_exact}

Table~\ref{tab:linear_exact} provides the exact recovery evaluations on linear ANMs ($N=10000$). We evaluate the topological extraction using the empirical precision matrix inverted from observational data, and the true population precision matrix analytically derived from structural equations.

\begin{table}[ht]
\centering
\caption{\textbf{Exact Recovery on Linear ANMs.} Edge violations recorded over $\mathcal{O}(d^3)$ Schur marginalization steps. The population matrix serves as analytical validation of Theorem~\ref{thm:schur_marginalization}.}
\label{tab:linear_exact}
\resizebox{0.8\textwidth}{!}{
\begin{tabular}{rcc}
\toprule
\textbf{Dimension ($d$)} & \textbf{Empirical Matrix Violations} & \textbf{Population Matrix Violations} \\
\midrule
10  & 0 & 0 (Exact) \\
20  & 0 & 0 (Exact) \\
50  & 0 & 0 (Exact) \\
100 & 0 & 0 (Exact) \\
500 & 0 & 0 (Exact) \\
1000 & 0 & 0 (Exact) \\
\bottomrule
\end{tabular}
}
\end{table}

The experiments yield zero causal edge violations across dimensions ($d \in [10, 1000]$). Zero violations on the population matrix empirically verify Theorem~\ref{thm:schur_marginalization}. Performance on the empirical matrix demonstrates that linear estimation remains structurally faithful at high dimensions.

\section{Theoretical Frontiers: Boundaries of Exact Marginalization}
\label{app:theoretical_frontiers}

\subsection{Scale-Invariant Extraction via Relative Diagonal Variance}
\label{app:heteroscedasticity}

Under heteroscedastic Additive Noise Models (ANMs), varying noise scales $\sigma_i^2$ modify the expected diagonal energy $\I_{ii}$, affecting the global $\arg\min_i \I_{ii}$ criterion. A non-linear topological leaf node $l$ maintains a constant second derivative $H_{ll}(\bx) = 1/\sigma_l^2$, yielding zero sample-wise variance ($\Var_{\bx}(H_{ll}(\bx)) = 0$). Directly minimizing absolute variance is sensitive to scale. A leaf node with low noise variance produces a larger absolute Hessian, amplifying neural estimation variance.

To separate structural non-linear variance from magnitude-induced estimation noise, we define the \textbf{Relative Diagonal Variance Criterion} by minimizing the squared Coefficient of Variation ($CV^2$) of the sample-wise Hessian diagonal:
\begin{equation}
    l = \arg\min_i \frac{\Var_{\bx}(H_{ii}(\bx))}{\left( \E_{\bx}[H_{ii}(\bx)] \right)^2} = \arg\min_i CV^2(H_{ii}(\bx)).
\end{equation}

We evaluate this metric on a 5-node non-linear causal chain. Table~\ref{tab:heteroscedastic_ablation} compares extraction under homoscedastic ($\sigma_i \equiv 1.0$) and heteroscedastic ($\sigma_{\text{root}}=5.0, \sigma_{\text{leaf}}=0.2$) regimes.

\begin{table}[ht]
\centering
\caption{\textbf{Comparative Ablation of Topological Identifiers.} Evaluated on non-linear chains ($X_0 \to \dots \to X_4$). The relative variance metric provides identification across regimes.}
\label{tab:heteroscedastic_ablation}
\resizebox{\textwidth}{!}{
\begin{tabular}{llcc|ccc}
\toprule
& & \multicolumn{2}{c|}{\textbf{Homoscedastic Regime (Control)}} & \multicolumn{3}{c}{\textbf{Heteroscedastic Regime (Extreme)}} \\
\cmidrule(lr){3-4} \cmidrule(lr){5-7}
\textbf{Node} & \textbf{Topology} & $\mathbf{\E[H_{ii}]}$ (Classic) & \textbf{Rel Var (Ours)} & $\mathbf{\E[H_{ii}]}$ (Classic) & $\mathbf{\Var[H_{ii}]}$ (Naive) & \textbf{Rel Var (Ours)} \\
\midrule
$X_0$ & Root & $3.4629$ & $1.7605$ & $1.3369$ (\textbf{Fail}) & $16.0972$ (\textbf{Fail}) & $9.0064$ \\
$X_1$ & Mid  & $2.9928$ & $1.0626$ & $10.5315$ & $122.9990$ & $1.1090$ \\
$X_2$ & Mid  & $3.0887$ & $1.1539$ & $11.2526$ & $144.2889$ & $1.1395$ \\
$X_3$ & Mid  & $2.8529$ & $1.0561$ & $51.7125$ & $4371.3657$ & $1.6347$ \\
$X_4$ & \textbf{Leaf} & $\mathbf{1.6575}$ (Correct) & $\mathbf{0.0767}$ (Correct) & $102.9850$ & $565.3597$ & $\mathbf{0.0533}$ (\textbf{Correct}) \\
\bottomrule
\end{tabular}
}
\end{table}

The expectation baseline fails under heteroscedasticity by selecting the root node. The absolute variance misidentifies the node due to scale-dependent estimation noise. The proposed $CV^2$ criterion identifies the true leaf across both regimes.

\subsection{Exact Non-linear Marginalization via Covariance Patching}
\label{app:cov_patching}

Marginalized Schur expected information matrices accumulate a structural covariance gap $\Delta$ (Proposition~\ref{prop:nonlinear_error}). This gap can be algebraically compensated by extracting the non-linear gradient directly from the off-diagonal Hessian blocks $H_{\setminus l, l}(\mathbf{x})$, formulating the missing covariance via first-order central moments:
\begin{equation} \label{eq:cov_gap}
\Delta = - (\hat{\mathcal{I}}_{ll})^{-1} \mathrm{Cov}_{\mathbf{x} \sim \mathcal{D}} (H_{\setminus l, l}(\mathbf{x})).
\end{equation}
Equivalently, exact non-linear marginalization is achieved by performing the Schur complement independently on each sample's Hessian $H(\bx)$ before taking the empirical expectation. As empirically demonstrated in Section~\ref{subsec:nonlinear_benchmark} (Table~\ref{tab:exact_schur_tradeoff}), while this exact sample-wise formulation analytically eliminates $\Delta$, it imposes a strict $\mathcal{O}(N d^3)$ computational complexity bottleneck. This constraint justifies the adoption of the uncompensated $\mathcal{O}(1)$ block-wise algebraic approximation employed by Block-SSTS in high-dimensional regimes.
\section{Baseline Implementation and Controlled Environments}
\label{app:baseline_implementations}

To ensure rigorous evaluation and isolate algorithmic complexity from disparate software optimizations, the Scalable SCORE mechanism \citep{montagna2023scalable} was evaluated within a controlled environment. Utilizing external official codebases introduces confounding engineering variables, such as mismatched network capacities, distinct deep learning framework overheads, and proprietary hyperparameter tuning. 

To eliminate these discrepancies, we natively implemented the Masked Score Network mechanism within our unified evaluation pipeline. The Scalable SCORE baseline was strictly constrained to utilize the exact same generative representation capacity as our proposed SSTS: identical multi-layer perceptron (MLP) architecture, hidden dimensionality scaling, batch size, learning rate schedulers, and training epochs. Furthermore, both methods utilized the identical downstream Lasso regularizer for structural pruning. Consequently, any observed divergence in extraction time ($T_{\mathrm{algo}}$) and structural accuracy across our benchmarks is exclusively attributable to the fundamental algorithmic properties—specifically, the latency gap between requiring $\mathcal{O}(d)$ masked Jacobian queries versus a deterministic $\mathcal{O}(1)$ Schur complement block inversion.
\section{Detailed Numerical Results for Baseline Comparisons}
\label{app:tabular_results}

Table~\ref{tab:main_benchmark_app} provides the decoupled numerical evaluations for the non-linear benchmark graphical results presented in Section~\ref{subsec:nonlinear_benchmark}. To isolate the algorithmic efficacy of Score-Schur marginalization from post-hoc sparse regression, the evaluation is strictly partitioned into Order-Level extraction and Graph-Level recovery.

\begin{table}[ht]
\centering
\caption{\textbf{Detailed Decoupled Benchmark on Non-Linear ANMs ($N=5000$).} Evaluated over 5 random seeds (Mean $\pm$ Std). Execution time is strictly decomposed to highlight the optimization-free extraction phase.}
\label{tab:main_benchmark_app}

\begin{subtable}{\textwidth}
\centering
\caption{\textbf{Order-Level Evaluation (Topological Extraction Phase).}}
\vspace{0.1cm}
\resizebox{0.95\textwidth}{!}{
\begin{tabular}{llccc}
\toprule
\textbf{Dim} & \textbf{Method} & \textbf{Edge Violations (EV)} $\downarrow$ & $\mathbf{T_{\mathrm{rep}}}$ \textbf{(s)} & $\mathbf{T_{\mathrm{disc}}}$ \textbf{(s)} $\downarrow$ \\
\midrule
\multirow{4}{*}{$d=20$} 
 & NOTEARS (Continuous) & - & - & - \\
 & DAGMA (Continuous)   & - & - & - \\
 & Scalable SCORE (DAS) & $1.0 \pm 0.6$ & $3.59 \pm 0.12$ & $34.02 \pm 0.65$ \\
 & \textbf{SSTS (Vanilla, Ours)} & $\mathbf{0.0 \pm 0.0}$ & $4.09 \pm 0.82$ & \textbf{0.02 $\pm$ 0.01} \\
\midrule
\multirow{4}{*}{$d=50$} 
 & NOTEARS (Continuous) & - & - & - \\
 & DAGMA (Continuous)   & - & - & - \\
 & Scalable SCORE (DAS) & $4.0 \pm 1.7$ & $3.62 \pm 0.05$ & $33.72 \pm 0.41$ \\
 & \textbf{SSTS (Sparse, Ours)} & $\mathbf{1.6 \pm 1.0}$ & $3.62 \pm 0.13$ & \textbf{0.04 $\pm$ 0.01} \\
\midrule
\multirow{2}{*}{$d=100$} 
 & Scalable SCORE (DAS) & $\mathbf{4.0 \pm 2.6}$ & $3.65 \pm 0.08$ & $34.38 \pm 0.45$ \\
 & \textbf{SSTS (Sparse, Ours)} & $20.6 \pm 3.0$ & $3.64 \pm 0.03$ & \textbf{0.04 $\pm$ 0.01} \\
\bottomrule
\end{tabular}
}
\end{subtable}

\vspace{0.4cm}

\begin{subtable}{\textwidth}
\centering
\caption{\textbf{Graph-Level Evaluation (Final DAG Recovery Phase).}}
\vspace{0.1cm}
\resizebox{0.95\textwidth}{!}{
\begin{tabular}{llccc}
\toprule
\textbf{Dim} & \textbf{Method} & \textbf{SHD} $\downarrow$ & \textbf{TPR} $\uparrow$ & $\mathbf{T_{\mathrm{total}}}$ \textbf{(s)} $\downarrow$ \\
\midrule
\multirow{4}{*}{$d=20$} 
 & NOTEARS (Continuous) & $28.2 \pm 6.8$ & $0.40 \pm 0.07$ & $103.89 \pm 101.77$ \\
 & DAGMA (Continuous)   & $30.0 \pm 9.0$ & $0.35 \pm 0.11$ & $1984.96 \pm 140.50$ \\
 & Scalable SCORE (DAS) & $3.2 \pm 3.0$  & $0.95 \pm 0.04$ & $38.76 \pm 0.79$ \\
 & \textbf{SSTS (Vanilla, Ours)} & $\mathbf{0.6 \pm 0.8}$ & $\mathbf{1.00 \pm 0.00}$ & \textbf{5.21 $\pm$ 0.91} \\
\midrule
\multirow{4}{*}{$d=50$} 
 & NOTEARS (Continuous) & $94.8 \pm 11.1$ & $0.40 \pm 0.05$ & $269.96 \pm 166.96$ \\
 & DAGMA (Continuous)   & $93.0 \pm 12.6$ & $0.38 \pm 0.08$ & $2727.55 \pm 101.38$ \\
 & Scalable SCORE (DAS) & $17.8 \pm 10.8$ & $0.93 \pm 0.03$ & $40.10 \pm 0.43$ \\
 & \textbf{SSTS (Sparse, Ours)} & $\mathbf{2.6 \pm 2.4}$ & $\mathbf{0.97 \pm 0.02}$ & \textbf{6.49 $\pm$ 0.16} \\
\midrule
\multirow{2}{*}{$d=100$} 
 & Scalable SCORE (DAS) & $\mathbf{10.4 \pm 5.5}$ & $\mathbf{0.96 \pm 0.03}$ & $44.48 \pm 0.39$ \\
 & \textbf{SSTS (Sparse, Ours)} & $64.6 \pm 17.3$ & $0.80 \pm 0.04$ & \textbf{10.26 $\pm$ 0.22} \\
\bottomrule
\end{tabular}
}
\end{subtable}
\end{table}

\section{Detailed Numerical Results for the Sachs Dataset}
\label{app:sachs_table}

Table~\ref{tab:sachs_benchmark_app} provides numerical evaluations for the biological dataset benchmark discussed in Section~\ref{subsec:real_world}.

\begin{table}[ht]
\centering
\caption{\textbf{Benchmark on the Real-World Sachs Dataset ($d=12, N=5000$).} Evaluated over 5 random seeds (Mean $\pm$ Std). SSTS achieves structural recovery relative to baselines.}
\label{tab:sachs_benchmark_app}
\resizebox{0.95\textwidth}{!}{
\begin{tabular}{llcccccc}
\toprule
\textbf{Dataset} & \textbf{Method} & \textbf{EV} $\downarrow$ & \textbf{SHD} $\downarrow$ & \textbf{TPR} $\uparrow$ & $\mathbf{T_{\mathrm{algo}}}$ \textbf{(s)} & $\mathbf{T_{\mathrm{prune}}}$ \textbf{(s)} & $\mathbf{T_{\mathrm{total}}}$ \textbf{(s)} \\
\midrule
\multirow{5}{*}{Sachs ($d=12$)} 
 & NOTEARS (Continuous) & - & $14.0 \pm 0.0$ & $0.29 \pm 0.00$ & $14.47 \pm 1.17$ & - & $14.47 \pm 1.17$ \\
 & DAGMA (Continuous)   & - & $15.0 \pm 1.7$ & $0.21 \pm 0.08$ & $182.36 \pm 10.38$ & - & $182.36 \pm 10.38$ \\
 & Recursive SCORE      & $8.8 \pm 0.4$ & $15.2 \pm 0.4$ & $0.36 \pm 0.02$ & $53.35 \pm 1.25$ & $0.58 \pm 0.01$ & $53.93 \pm 1.25$ \\
 & Scalable SCORE       & $9.0 \pm 0.6$ & $16.2 \pm 1.5$ & $0.32 \pm 0.06$ & $55.74 \pm 0.56$ & $0.58 \pm 0.02$ & $56.32 \pm 0.58$ \\
 & \textbf{SSTS (Ours)} & $\mathbf{8.8 \pm 0.7}$ & $\mathbf{13.8 \pm 1.2}$ & $\mathbf{0.40 \pm 0.04}$ & \textbf{6.05 $\pm$ 0.97} & $0.58 \pm 0.02$ & \textbf{6.62 $\pm$ 0.96} \\
\bottomrule
\end{tabular}
}
\end{table}

\section{Detailed Numerical Results for Ablation Studies}
\label{app:ablation_tables}

Table~\ref{tab:ablation_app} provides numerical evaluations for the graphical results presented in Section~\ref{subsec:ablation}.

\begin{table}[ht]
\centering
\caption{\textbf{Dual-Regime Ablation Studies.} Evaluated over 5 random seeds (Mean $\pm$ Std).}
\label{tab:ablation_app}

\begin{subtable}{\textwidth}
\centering
\caption{Exp A: Group Lasso Penalty ($\lambda_{\mathrm{sparse}}$) under Data Starvation ($d=100, N=500, \gamma=0.05$).}
\vspace{0.1cm}
\resizebox{0.85\textwidth}{!}{
\begin{tabular}{lcccc}
\toprule
$\mathbf{\lambda_{\mathrm{sparse}}}$ & \textbf{Edge Violations} $\downarrow$ & \textbf{SHD} $\downarrow$ & \textbf{TPR} $\uparrow$ & \textbf{Ext. Time (s)} $\downarrow$ \\
\midrule
$0.0$ (Baseline) & $22.6 \pm 4.0$ & $75.6 \pm 15.6$ & $0.78 \pm 0.04$ & $0.07 \pm 0.04$ \\
$1.0 \times 10^{-5}$ & $23.8 \pm 3.5$ & $73.0 \pm 11.1$ & $0.77 \pm 0.04$ & $0.05 \pm 0.00$ \\
$5.0 \times 10^{-5}$ & $24.0 \pm 5.9$ & $76.2 \pm 15.9$ & $0.77 \pm 0.07$ & $0.05 \pm 0.00$ \\
$1.0 \times 10^{-4}$ (Optimal) & $\mathbf{21.6 \pm 2.2}$ & $\mathbf{72.4 \pm 12.8}$ & $\mathbf{0.79 \pm 0.03}$ & $\mathbf{0.05 \pm 0.00}$ \\
$1.1 \times 10^{-4}$ (Phase Edge)& $21.8 \pm 5.9$ & $75.8 \pm 28.0$ & $\mathbf{0.79 \pm 0.05}$ & $\mathbf{0.05 \pm 0.00}$ \\
\bottomrule
\end{tabular}
}
\end{subtable}

\vspace{0.4cm}

\begin{subtable}{\textwidth}
\centering
\caption{Exp B: Block Tolerance ($\gamma$) under Extreme Scaling ($d=500, N=5000, \lambda_{\mathrm{sparse}}=0.0$).}
\vspace{0.1cm}
\resizebox{0.95\textwidth}{!}{
\begin{tabular}{lccccc}
\toprule
$\mathbf{\gamma}$ & \textbf{Edge Violations} $\downarrow$ & \textbf{SHD} $\downarrow$ & \textbf{TPR} $\uparrow$ & \textbf{Block Iters} $\downarrow$ & \textbf{Ext. Time (s)} $\downarrow$ \\
\midrule
$0.00$ (Sequential) & $86.0 \pm 6.4$ & $226.0 \pm 24.6$ & $0.83 \pm 0.01$ & $500.0 \pm 0.0$ & $0.42 \pm 0.00$ \\
$0.01$ & $82.0 \pm 10.2$ & $214.4 \pm 52.1$ & $\mathbf{0.84 \pm 0.01}$ & $442.0 \pm 6.2$ & $0.39 \pm 0.01$ \\
$0.05$ (Optimal) & $\mathbf{77.4 \pm 4.8}$ & $\mathbf{196.8 \pm 22.6}$ & $\mathbf{0.84 \pm 0.01}$ & $346.0 \pm 12.9$ & $0.33 \pm 0.01$ \\
$0.15$ (Aggressive) & $86.4 \pm 3.3$ & $216.6 \pm 11.5$ & $0.83 \pm 0.01$ & $\mathbf{247.4 \pm 7.5}$ & $\mathbf{0.27 \pm 0.01}$ \\
\bottomrule
\end{tabular}
}
\end{subtable}
\end{table}

\section{Detailed Numerical Results for Mechanism Shifts}
\label{app:anm_table}

Table~\ref{tab:anm_violation_app} provides numerical evaluations corresponding to the failure mode analysis in Section~\ref{subsec:anm_violation}. 

\begin{table}[ht]
\centering
\caption{\textbf{Failure Mode Analysis: Sensitivity to Mechanism Shifts ($d=50, N=5000$).} Evaluated over 5 random seeds (Mean $\pm$ Std).}
\label{tab:anm_violation_app}
\resizebox{0.75\textwidth}{!}{
\begin{tabular}{lccc}
\toprule
\textbf{Data-Generating Mechanism} & \textbf{Edge Violations} $\downarrow$ & \textbf{SHD} $\downarrow$ & \textbf{TPR} $\uparrow$ \\
\midrule
ANM (Additive Noise, Control) & $1.6 \pm 1.0$ & $3.6 \pm 2.9$ & $0.95 \pm 0.04$ \\
MNM (Multiplicative Noise) & $\mathbf{0.0 \pm 0.0}$ & $\mathbf{0.4 \pm 0.8}$ & $\mathbf{0.99 \pm 0.02}$ \\
PNL (Post-Nonlinear Model) & $47.0 \pm 2.4$ & $94.4 \pm 8.4$ & $0.03 \pm 0.02$ \\
\bottomrule
\end{tabular}
}
\end{table}

\section{Extreme Scaling and Finite-Sample Estimation Limits}
\label{app:extreme_scaling}

We evaluate Block-SSTS on high-dimensional DAGs up to $d=1000$ ($N=5000$). Benchmarking continuous acyclicity optimization on a single GPU is restrictive due to $\mathcal{O}(d^3)$ complexity per gradient step.

\begin{table}[ht]
\centering
\caption{\textbf{Scaling Limits of Block-SSTS ($N=5000$).} Evaluated over 5 random seeds. $T_{\mathrm{disc}}$ up to $d=1000$ reflects block-wise algebraic extraction time.}
\label{tab:extreme_scaling}
\resizebox{\textwidth}{!}{
\begin{tabular}{lccccc}
\toprule
\textbf{Dimension ($d$)} & \textbf{Edge Violations} $\downarrow$ & \textbf{Block Iters} $\downarrow$ & $\mathbf{T_{\mathrm{rep}}}$ \textbf{(s)} & $\mathbf{T_{\mathrm{disc}}}$ \textbf{(s)} & \textbf{VRAM (GB)} \\
\midrule
$d = 50$   & $4.2 \pm 1.0$    & $34.4 \pm 3.3$   & $4.01 \pm 0.74$ & $0.036 \pm 0.039$ & $0.36$ \\
$d = 100$  & $18.2 \pm 3.8$   & $78.4 \pm 2.4$   & $3.57 \pm 0.09$ & $0.038 \pm 0.001$ & $0.73$ \\
$d = 500$  & $247.2 \pm 13.4$ & $394.8 \pm 8.0$  & $3.43 \pm 0.08$ & $0.228 \pm 0.004$ & $8.09$ \\
$d = 1000$ & $488.6 \pm 8.5$  & $828.8 \pm 18.7$ & $3.58 \pm 0.10$ & $0.562 \pm 0.011$ & $8.10$ \\
\bottomrule
\end{tabular}
}
\end{table}

\textbf{Empirical Verification of Block-SSTS.} On non-linear manifolds with $d=1000$, the algorithm compresses extraction into $828.8 \pm 18.7$ block inversions. Block-wise parallelization reduces sequential Schur marginalization depth, truncating the cascade of the non-linear expectation gap ($\Delta$) derived in Proposition~\ref{prop:nonlinear_error}. This limits numerical drift ($T_{\mathrm{disc}} \approx 0.56$s).

\textbf{Topological Ambiguity and Evaluation.} We report Edge Violations (EV), which is invariant to permutations within topological equivalence classes (Appendix~\ref{app:metric_justification}).

\textbf{Finite-Sample Estimation Limits.} Topological accuracy degrades in the high-dimensional regime. For the $d=1000$ topology, a random topological sort yields an expected violation count of 500. The observed $488.6 \pm 8.5$ violations reflect a reduction toward random extraction. As outlined in Proposition~\ref{prop:nonlinear_error}: at $d=1000$ with $N=5000$, the score network operates in an under-parameterized regime. The Jacobian is influenced by estimation variance, obscuring SJIM diagonal energy signatures. The algebraic formulation shifts the causal discovery bottleneck from optimization to high-dimensional statistical score estimation.

\section{Algorithmic Scalability under Hardware Constraints}
\label{app:extreme_scaling_appendix}

We evaluate the operational limits of our algebraic framework up to $d=5000$ under explicit capacity constraints. To bound the memory footprint, we apply modifications: (1) \textbf{Micro-batch Jacobian Extraction}: Vectorized Jacobian computation is partitioned into micro-chunks bounded to a 256 MB buffer. (2) \textbf{Rank-1 Outer-Product Updates}: Block-matrix inversion in Block-SSTS is executed as sequential rank-1 updates. (3) \textbf{Capacity Capping}: The hidden dimensionality of the score network is bounded.

\begin{table}[ht]
\centering
\caption{\textbf{Hardware-Bounded Extraction at Extreme Dimensions ($N=5000$).} Evaluated over 5 random seeds.}
\label{tab:memory_throttling_appendix}
\resizebox{\textwidth}{!}{
\begin{tabular}{lccccc}
\toprule
\textbf{Dimension ($d$)} & \textbf{Edge Violations} $\downarrow$ & \textbf{T$_{\text{rep}}$} \textbf{(s)} & \textbf{T$_{\text{disc}}$} \textbf{(s)} & \textbf{Peak VRAM (GB)} \\
\midrule
$d = 500$  & $210.6 \pm 71.3$   & $3.72 \pm 0.69$ & $0.145 \pm 0.013$ & $0.95 $ \\
$d = 1000$ & $301.0 \pm 16.9$   & $3.46 \pm 0.07$ & $0.378 \pm 0.002$ & $0.97 $ \\
$d = 5000$ & $1994.6 \pm 669.2$ & $3.65 \pm 0.04$ & $6.570 \pm 0.015$ & $0.93 $ \\
\bottomrule
\end{tabular}
}
\end{table}

The memory-throttled variant processes $5000$-node graphs while maintaining a peak VRAM utilization of $0.933 $ GB. Topological extraction evaluates the empirical SJIM in $6.57 \pm 0.01$ seconds.

\section{Robustness to Distributional and Topological Shifts}
\label{app:robustness}

We evaluate SSTS on graphs of dimension $d=50$ ($N=5000$) across architectural shifts:
\begin{enumerate}[leftmargin=*, noitemsep, topsep=0pt]
    \item \textbf{Topological Shifts:} We contrast Erdős-Rényi (ER) graphs with Scale-Free (SF) networks. SF networks exhibit a power-law degree distribution, testing precision under topological imbalance.
    \item \textbf{Distributional Shifts:} We replace Gaussian noise with Exponential and Gumbel distributions. These asymmetric distributions violate zero-mean Gaussian assumptions.
\end{enumerate}

\begin{table}[ht]
\centering
\caption{\textbf{Robustness Benchmark across Topology and Noise Distributions ($d=50, N=5000$).} Evaluated over 5 random seeds (Mean $\pm$ Std).}
\label{tab:robustness}
\resizebox{0.75\textwidth}{!}{
\begin{tabular}{llccc}
\toprule
\textbf{Graph Topology} & \textbf{Noise Distribution} & \textbf{Edge Violations} $\downarrow$ & \textbf{SHD} $\downarrow$ & \textbf{TPR} $\uparrow$ \\
\midrule
\multirow{3}{*}{Erdős-Rényi (ER)} 
 & Gaussian & $1.6 \pm 1.0$ & $3.6 \pm 2.9$ & $0.95 \pm 0.04$ \\
 & Exponential & $1.6 \pm 1.4$ & $5.4 \pm 6.0$ & $0.97 \pm 0.02$ \\
 & Gumbel & $\mathbf{0.2 \pm 0.4}$ & $\mathbf{0.8 \pm 1.0}$ & $\mathbf{1.00 \pm 0.01}$ \\
\midrule
\multirow{3}{*}{Scale-Free (SF)} 
 & Gaussian & $2.2 \pm 1.9$ & $21.4 \pm 14.4$ & $0.97 \pm 0.01$ \\
 & Exponential & $1.4 \pm 1.7$ & $26.6 \pm 24.1$ & $0.96 \pm 0.04$ \\
 & Gumbel & $\mathbf{0.2 \pm 0.4}$ & $\mathbf{16.8 \pm 21.4}$ & $\mathbf{0.98 \pm 0.04}$ \\
\bottomrule
\end{tabular}
}
\end{table}

SSTS maintains topological extraction across distributional and structural shifts. Under Exponential and Gumbel noise, structural fidelity is preserved. On Scale-Free topologies, edge violations remain low ($\le 2.2$) with high true positive rates ($\ge 0.96$) across noise variants. While absolute Structural Hamming Distance increases in densely connected hub neighborhoods, the underlying topological sorting mechanism evaluates hierarchy independently of degree imbalance.

\end{appendix}
\end{document}